\pgfplotsset{every tick label/.append style={font=\tiny}}
\pgfplotsset{compat=1.5}
\newcommand{\indep}{\mathrel{\perp\mspace{-10mu}\perp}}
\newcommand{\dep}{\centernot{\indep}}
\newcommand{\dsep}{\perp}
\newcommand{\dcon}{\not \perp}
\newcommand\B[1]{\bm{#1}}
\newcommand\C[1]{\mathcal{#1}}
\newcommand\given{\,|\,}
\DeclareMathOperator*{\argmax}{arg\,max}
\DeclareMathOperator*{\argmin}{arg\,min}
\newtheorem{assumptions}{Assumption}
\newtheorem{lemma}{Lemma}
\newtheorem{theorem}{Theorem}
\newtheorem{corollary}{Corollary}
\newtheorem{proposition}{Proposition}
\newtheorem{remark}{Remark}
\newlength{\emailslength}
\title{An Upper Bound for Random Measurement Error in Causal Discovery}
\author{
\textbf{Tineke Blom}\hspace{0.5pt}\textsuperscript{\normalfont{a,1}}
\hspace{3em}
\textbf{Anna Klimovskaia}\hspace{0.5pt}\textsuperscript{\normalfont{b,2}}
\hspace{3em}
\textbf{Sara Magliacane}\hspace{0.5pt}\textsuperscript{\normalfont{c,3}}
\hspace{3em}
\textbf{Joris M. Mooij}\hspace{0.5pt}\textsuperscript{\normalfont{a,4}}\\
\parbox{0.52\textwidth}{\vspace*{4pt}\raggedright
\textsuperscript{a}\hspace{0.5pt}{\small{}Informatics Institute, University of Amsterdam, The Netherlands}\\
\textsuperscript{b}\hspace{0.5pt}{\small{}Institute of Molecular Systems Biology, ETH Z{\"u}rich, Zwitserland}\\
\textsuperscript{c}\hspace{0.5pt}{\small{}IBM Research, Yorktown Heights, USA}}\\
\parbox{0.60\textwidth}{\vspace*{4pt}\raggedright\small{}E-mails: 
\textsuperscript{1}\hspace{0.5pt}\href{mailto:t.blom2@uva.nl}{\texttt{t.blom2@uva.nl}}, 
\textsuperscript{2}\hspace{0.5pt}\href{mailto:klimovskaia@imsb.biol.ethz.ch}{\texttt{klimovskaia@imsb.biol.ethz.ch}},\\\hspace{\emailslength}\textsuperscript{3}\hspace{0.5pt}\href{mailto:sara.magliacane@gmail.com}{\texttt{sara.magliacane@gmail.com}}, 
\textsuperscript{4}\hspace{0.5pt}\href{mailto:j.m.mooij@uva.nl}{\texttt{j.m.mooij@uva.nl}}}
}
\begin{document}
	
	\maketitle
	
	\begin{abstract}
		Causal discovery algorithms infer causal relations from data based on several assumptions, including notably the absence of measurement error. However, this assumption is most likely violated in practical applications, which may result in erroneous, irreproducible results. In this work we show how to obtain an upper bound for the variance of random measurement error from the covariance matrix of measured variables and how to use this upper bound as a correction for constraint-based causal discovery. We demonstrate a practical application of our approach on both simulated data and real-world protein signaling data.
	\end{abstract}
	
	\section{INTRODUCTION}
	
	The discovery of causal relations is a fundamental objective in science, and the interest in causal discovery algorithms has increased rapidly since they were first established in the 1990s \citep{Pearl2000,Spirtes2000}. In practice, it may happen that their predictions are not reproducible in independent experiments. In this article we show that the presence of measurement error may be a possible explanation for incorrect and inconsistent output and we propose a solution aimed to mitigate its ramifications.
	
	The presence of measurement error complicates causal discovery, because measured quantities are typically not causes of one another, even when the variables that they represent are. Consider the example in Figure \ref{fig:me_example}, and suppose that exercise $E$ is a variable that can be controlled in an experiment, weight loss $W$ can be measured very precisely, but the amount of burned calories $C$ cannot be observed directly. Suppose we do have a measured quantity $\tilde{C}=C+M_C$ with $M_C$ a measurement error. Even though exercise and weight loss are independent conditional on burned calories, they are not when we condition on the measurement $\tilde{C}$. If $M_C$ is large, one might even find that the measurements of the calories are independent of exercise conditional on the weight loss. A researcher who is unaware of the measurement error could then draw incorrect conclusions (e.g.\ weight loss causes the burning of calories).
	
	\begin{figure}[ht]
		\centering
		\begin{tikzpicture}[->,>=stealth,shorten >=1pt,auto,node distance=1.2cm,
		semithick,square/.style={regular polygon,regular polygon sides=4}]
		\node[state, fill=lightgray] (X) [] {$E$};
		\node[state] (Y) [right of=X] {$C$};
		\node[state, fill=lightgray] (Z) [right of=Y] {$W$};
		\node[state, fill=lightgray] (Ym) [below of=Y] {$\tilde{C}$};
		
		\path (Y) edge node {} (Z)
		(X) edge node {} (Y)
		(Y) edge node {} (Ym);
		\end{tikzpicture}
		\caption{Example of causal discovery in the presence of measurement error. Gray shaded nodes are observed variables, white nodes are latent variables.}
		\label{fig:me_example}
	\end{figure}
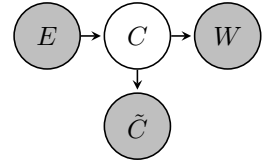
	
	The example in Figure \ref{fig:me_example} illustrates the crucial difference between measurement error and disturbance terms that are usually considered in causal models. In particular, the fluctuations that are due to measurement error do not propagate to effect variables (e.g.\ measurement noise $M_C$ in $\tilde{C}$ cannot be seen in $E$), whereas the effects of unmodeled causes do.
	
	Following \citep{Scheines2016, Zhang2017, Pearl} and \citep{Kuroki2014}, we focus on \emph{random} measurement error, an independent random variable that adds noise to the measurement of one variable in a model. We present a method that identifies an upper bound for the variance of random measurement error. This result builds on previous work where the identification of sets of variables that are d-separated by a common latent variable using vanishing tetrad constraints is considered, see \citep{Silva2001, Pearl, Bollen1989, Sullivant2010}. Uncertainty regarding the size of the measurement error can be propagated to an uncertainty in the partial correlations of the latent  variables that are yet unperturbed by measurement error, see also \citep{Harris2013a}. This uncertainty can then be taken into account when performing statistical tests so that we have outputs: dependent, independent, or unknown. Although these types of outputs for independence tests have been already used in previous work, e.g. \citep{Triantafillou2017}, in that case the thresholds for the different decisions were hyperparameters of the algorithm, while we provide an adaptive and more principled way to set them. Similarly to previous work, our approach relies on strong faithfulness \citep{Spirtes2000, Kalisch2007, Maathuis2010} but crucially it does not require causal sufficiency, i.e.\ the absence of unmeasured confounders, as \citet{Zhang2017} do.
	
	In this work, we propose a practical correction method for measurement error in the context of constraint-based causal discovery. We demonstrate the effectiveness of our approach in identifying causal structures using Local Causal Discovery (LCD) \citep{Cooper1997} both on simulated data and real-world protein signaling data. Although we focus on one particular causal discovery algorithm, our ideas can be applied to other constraint-based causal discovery algorithms as well, but we consider this to be outside of the scope of this paper.
	
	\section{PRELIMINARIES}
	\label{sec:preliminaries}
	
	For the remainder of this paper, variables will be denoted by capital letters and sets of variables by bold capital letters. We will assume that the data-generating processes described here can be modeled by a causal graph $\C{G}$ with nodes $\B{V}$ and directed and bidirected edges $\B{E}$, where some of the variables in $\B{V}$ may be latent. When there is a directed edge from a variable $X$ to a variable $Y$, we say that $X$ is a direct cause of $Y$. When there is a sequence of directed edges from $X$ to $Y$ with all arrowheads pointing towards $Y$ we call it a directed path, and we say that $X$ is an ancestor of $Y$. Bidirected edges between two variables $X$ and $Y$ are used to represent hidden confounders. Conditional independence between $\bm{X}$ and $\bm{Y}$ while controlling for variables in $\bm{Z}$ is denoted by $\bm{X}\indep \bm{Y} \given \bm{Z}$. If $\bm{Z}$ d-separates $\bm{X}$ from $\bm{Y}$, we denote this as $\bm{X} \dsep \bm{Y} \given \bm{Z}$.
	
	In the absence of measurement error, the following commonly made assumptions allow us to relate conditional (in)dependences between disjoint sets of variables $\bm{X},\bm{Y}$, and $\bm{Z}$ to d-separation in an underlying causal graph $\C{G}$ \citep{Pearl2000, Spirtes2000}. Throughout the remainder of this paper we will assume that the common assumptions hold. 
	 
	\begin{assumptions}[Common Assumptions]
		\label{ass:common}
		$\,$ 
		\begin{enumerate}[topsep=0pt, itemsep=0.1ex, partopsep=-5ex, parsep=1ex]
			\item There are no directed cycles in the causal graph.
			\item Causal Markov Property: For all disjoint sets of variables $\B{X},\B{Y},\B{Z}$:  $\B{X} \dsep \B{Y} \given \B{Z} \ \implies \B{X} \indep \B{Y} \given \B{Z}$.
			\item Causal Faithfulness: For all disjoint sets of variables $\B{X},\B{Y},\B{Z}$: $\B{X} \indep \B{Y} \given \B{Z} \implies \B{X} \dsep \B{Y} \given \B{Z}$.
			\item No selection bias is present.
		\end{enumerate}
	\end{assumptions}
	
	\paragraph{Local causal discovery}
	
	The LCD (Local Causal Discovery) algorithm is a straight-forward and efficient search method to detect one specific causal structure from experimental data using dependence relations between variables in $\bm{V}$ \citep{Cooper1997}.\footnote{\citet{Triantafillou2017} give an conservative variant of LCD with an application to protein signaling data.} LCD uses both (conditional) independences and background knowledge to recover causal relations from data. 
	
	The LCD algorithm looks for triples of variables $(X, Y, Z)$ for which (a) $X$ is not caused by any observed variable and (b) the following (in)dependences hold: $X\dep Y$, $Y\dep Z$, and $X\indep Z\mid Y$. We henceforth call such triples \emph{LCD triples}. Under the common assumptions, the causal model that corresponds to this independence pattern is shown in Figure \ref{fig:lcd triple}.
	
	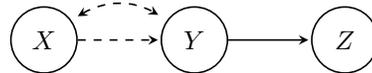
\begin{figure}[H]
		\centering
		\begin{tikzpicture}[->,>=stealth,shorten >=1pt,auto,node distance=2cm,
		semithick,square/.style={regular polygon,regular polygon sides=4}]
		\node[state] (X) [] {$X$};
		\node[state] (Y) [right of=X] {$Y$};
		\node[state] (Z) [right of=Y] {$Z$};	
		
		\path (Y) edge node {} (Z);
		\begin{scope}[dashed]
		\path (X) edge node { } (Y);
		\end{scope}
		\begin{scope}[dashed, style={<->}]
		\path ([yshift=2ex]X.east) edge[bend left] node { } ([yshift=2ex]Y.west);
		\end{scope}
		\end{tikzpicture}
		\caption{An LCD triple has the above causal structure, with at least one of the dashed arrows present.}
		\label{fig:lcd triple}
	\end{figure}
	
	\paragraph{Conditional (in)dependence testing}
	In practice, constraint-based causal discovery algorithms rely on a statistical test to assess the (in)dependence relationships between variables. For data that has a multivariate Gaussian distribution, a (conditional) independence corresponds to a vanishing (partial) correlation coefficient. For random variables $(X_1,\ldots,X_D)\sim\mathcal{N}(\boldsymbol{\mu},\Sigma)$, the Pearson partial correlation can be calculated from the inverse covariance matrix, which we will denote by $\Lambda=\Sigma^{-1}$.
	
	Conventionally, one calculates a p-value $p_T$ for the (conditional) dependence between variables, so that dependence relations can be determined by
	\begin{equation}
	\label{eq:thresh}
	\begin{cases}
	X\dep Y\mid \B{Z} & \,\text{ if }  p_T < \alpha\\
	X\indep Y\mid \B{Z} & \,\text{ if }  p_T > \beta,
	\end{cases}.
	\end{equation}
	where $\alpha$ and $\beta$ are thresholds for dependence and independence respectively. The nature of the relation is undecided when $\alpha\leq p_T\leq \beta$. Usually only a single threshold $\alpha=\beta=0.01$ or $\alpha=\beta=0.05$ is used.
	
	\section{CAUSAL DISCOVERY UNDER MEASUREMENT ERROR}
	\label{sec:me}
	In this section we illustrate some possible negative effects of random measurement error on constraint-based causal discovery. To that end, we analyze the behavior of partial correlations for increasing measurement error in a simple model. We consider \emph{random} measurement error, which is a vector of independent noise variables $\boldsymbol{M}=(M_1,\ldots,M_n)$. The measurements of the random vector $\boldsymbol{X}=(X_1,\ldots,X_n)$ are then given by $\tilde{\boldsymbol{X}}=(\tilde{X}_1,\ldots\tilde{X}_2) = \boldsymbol{X}+\boldsymbol{M}$. This means that a measurement node $\tilde{X}_i$ is always child-less and has precisely two parents: $X_i$ and the measurement error source $M_i$. 
	
	In many practical applications, it is reasonable to assume that the measurement noise has a Gaussian distribution. For instance, when the measurement noise is the sum of many small independent sources of error, the measurement error approximates a normal distribution because of the central limit theorem. In this article we consider the case where the measurement error is Gaussian so that the measurement noise variables are given by $\boldsymbol{M}=(M_1,\ldots,M_n)\sim\mathcal{N}(0,\Sigma_{\boldsymbol{M}})$, where $\Sigma_{\boldsymbol{M}}$ is a diagonal matrix.

	\subsection{MOTIVATIONAL EXAMPLE} We illustrate the effects of measurement error on the following structural causal model:
	\begin{align*}
		X_1 &= E_1\\
		X_2 &= \beta_{12} X_1 + E_2\\
		X_3 &= \beta_{23} X_2 + E_{3} \\
		\tilde{X}_2	&= X_2 +  M_{2}
	\end{align*}	
	where $X_1$ and $X_3$ are not affected by measurement error. In this model $E_1, E_2,$ and $E_3$ are normally distributed noise variables and $M_2$ is a normally distributed random measurement error. The observed variables are $X_1, \tilde{X}_2,$ and $X_3$, where the second represents the corrupted measurement of $X_2$. The corresponding causal graph is displayed in Figure \ref{fig:random_me_model}. 
	
	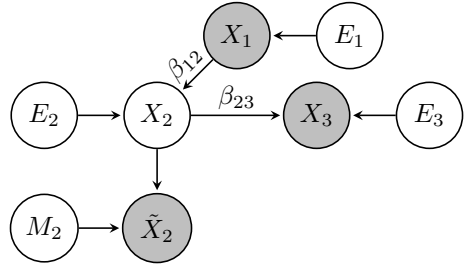
\begin{figure}
		\centering
		\begin{tikzpicture}[->,>=stealth,shorten >=1pt,auto,node distance=1.5cm,
		semithick,square/.style={regular polygon,regular polygon sides=4},el/.style = {inner sep=2pt, align=left, sloped},
		every label/.append style = {font=\tiny}]
		
		\node[state, fill=lightgray] (X1)              {$X_1$};
		\node[state] (X2) [below left of=X1] {$X_2$};
		\node[state, fill=lightgray] (X3) [below right of=X1] {$X_3$};
		\node[state, fill=lightgray]         (X2m) [below of=X2] {$\tilde{X}_2$};
		\node[state] (e1) [left of = X2] {$E_{2}$};
		\node[state] (e2) [right of = X3] {$E_{3}$};
		\node[state] (e3) [left of = X2m] {$M_{2}$};
		\node[state] (e5) [right of = X1] {$E_{1}$};

		\path 	(X1) edge node[el,above] {$\beta_{12}$} (X2)
		(X2) edge node[el,above] {$\beta_{23}$} (X3)
		(X2) edge node[above]  {} (X2m)
		(e1) edge node[el,below] {} (X2)
		(e2) edge node[el,below] {} (X3)
		(e3) edge node[el,above] {} (X2m)
		(e5) edge node[el,above] {} (X1);
		\end{tikzpicture}
		\caption{Causal graph of a model with random measurement error on $X_2$. Gray shaded nodes are observed variables (the others are latent), and coefficients alongside the arrows represent the coefficients in the model.}
		\label{fig:random_me_model}
		\vspace{-6pt}
	\end{figure}
	
	Note that in the random measurement error model, $(X_1,X_2,X_3)$ has the causal structure of an LCD triple, but $(X_1,\tilde{X}_2,X_3)$ does not. Therefore $X_1\indep X_3\given X_2$ and the partial correlation for the latent unmeasured variables satisfies $\rho_{13|2}=0$. Let $\tilde{\Sigma}$ be the covariance matrix of $(X_1,\tilde{X}_2, X_3)$ and $\tilde{\Lambda}$ its inverse. Then we have that
	\begin{align*}
	\tilde{\rho}_{13|2} = - \frac{\tilde{\Lambda}_{13}}{\sqrt{\tilde{\Lambda}_{11}\tilde{\Lambda}_{33}}} = \frac{-\beta_{12}\beta_{23}\tilde{\Sigma}_{11}\text{var}(M_2)}{|\tilde{\Sigma}|\sqrt{\tilde{\Lambda}_{11}\tilde{\Lambda}_{33}}} \neq 0,
	\end{align*}
	for non-zero parameters, so that $X_1\dep X_3\given \tilde{X}_2$.
		
	\begin{remark}
		A statistical test with conventional thresholds would conclude that $X_1$ and $X_3$ are conditionally dependent conditional on the measurement $\tilde{X}_2$, if the measurement error is large enough. If we would incorrectly assume that there is no measurement error, so that $X_2=\tilde{X}_2$, then the Markov assumption would appear to be violated.
	\end{remark}
	
	\subsection{EMPIRICAL STUDY}	
	For a better understanding of the impact of measurement error on causal discovery, we consider the effect of varying the measurement error variance $\text{var}(M_2)$ relative to the total variance of the measurement $\tilde{X}_2$ on the partial correlations in the motivational example.

	Figure \ref{fig:random measurement error} shows the effect of increasing relative random measurement error on different partial correlations, where the dotted lines represent the $\alpha=0.05$ threshold at different sample sizes. It can be seen that for zero measurement error (so that $\tilde{X}_2=X_2$), only the yellow line is below the red and black dotted lines. In that case a conventional statistical test would indicate that all variables are marginally dependent and $X_1\indep X_3\given\tilde{X}_2$, so that $(X_1,\tilde{X}_2,X_3)$ is an LCD triple, and the directed edge from $\tilde{X}_2$ to $X_3$ can be detected. For relative measurement errors larger than $\sim 0.25$ this conditional independence is no longer detected (because the yellow line is above the black-dotted line). 
		
	In Figure \ref{fig:random measurement error} we can also observe that for sample size $100$ and a relative measurement error larger than $\sim 0.3$, a conventional statistical test would indicate that $X_1\indep \tilde{X}_2\given X_3$ since the partial correlation $\tilde{\rho}_{12|3}\approx 0$ (i.e.\ below the red dotted line) and all other (partial) correlations indicate a dependence (i.e.\ above the red dotted line). Causal discovery algorithms cannot recover the correct causal structure from these constraints. In fact, the LCD algorithm would conclude that $(X_1,X_3,\tilde{X}_2)$ is an LCD triple so that there must be a directed edge in the \emph{reversed} direction.
	
	\begin{remark}
		The results of constraint-based causal discovery may depend on the sample size. This can be better understood by observing that the dependences that are identified by a statistical test, depend both on the size of the measurement error and the sample size. This may lead to inconsistent causal discoveries, which cannot be reproduced on new datasets.
	\end{remark}
	
	This example shows how measurement error interferes with detecting the correct causal structures, which may lead to edge deletions, insertions or reversals. Note that although we focused on the LCD algorithm here, the conclusions that we draw are more generally applicable to constraint-based causal discovery algorithms.
	
	\begin{remark}
		For relative measurement error of $\sim 0.25$ a conflicting set of (in)dependences arises for $n=100$. Since both the yellow and purple line are below the red dotted line, a statistical test would indicate that $X_1\indep X_3\given\tilde{X}_2$ and $X_1\indep\tilde{X}_2\given X_3$, while all variables are marginally dependent. But there is no model that satisfies the common assumptions and these (in)dependences.
	\end{remark}
	
	\definecolor{mycolor1}{rgb}{0.00000,0.75000,0.75000}
	\definecolor{mycolor2}{rgb}{0.75000,0.00000,0.75000}
	\definecolor{mycolor3}{rgb}{0.75000,0.75000,0.00000}
	\definecolor{ao(english)}{rgb}{0.11, 0.35, 0.02}
	\definecolor{applegreen}{rgb}{0.4, 0.69, 0.2}
	
	\begin{figure}
		\centering
		\begin{tikzpicture}
		\begin{axis}[
		width=0.7\linewidth,
		height=140pt,
		xlabel={$\text{var}(M_2) / \text{var}(\tilde{X}_2)$},
		xmin=0,xmax=1.0,xtick={0,0.2,0.4,0.6,0.8,1.0},
		ylabel={(partial) correlation},
		ymin=-0.1,ymax=1.0,ytick={0,0.5,1.0},
		legend pos=outer north east,
		legend columns=1,
		legend cell align={left}
		]
		
		\addplot[red!25!white, line width=0.5pt, forget plot] coordinates {
			(0,0.1975)
			(1.0,0.1975)
		};
		\addplot[black!25!white, line width=0.5pt, forget plot] coordinates {
			(0,0.0625)
			(1.0,0.0625)
		};
		
		\addplot[color=blue, line width=1.2pt] table[x index = {0}, y index = {1}, col sep = comma] {rand_me.csv};
		\addplot[color=black!50!green, line width=1.2pt] table[x index = {0}, y index = {2}, col sep = comma] {rand_me.csv};
		\addplot[color=mycolor1, line width=1.2pt] table[x index = {0}, y index = {3}, col sep = comma] {rand_me.csv};
		\addplot[color=mycolor2, line width=1.2pt] table[x index = {0}, y index = {4}, col sep = comma] {rand_me.csv};
		\addplot[color=mycolor3, line width=1.2pt] table[x index = {0}, y index = {5}, col sep = comma] {rand_me.csv};
		
		\addplot[red!25!white, line width=0.5pt, forget plot] coordinates {
			(0,-0.1975)
			(1.0,-0.1975)
		};
		\addplot[black!25!white, line width=0.5pt, forget plot] coordinates {
			(0,-0.0625)
			(1.0,-0.0625)
		};
		
		\addplot[dashed, red, line width=1.2pt] coordinates {
			(0,0.1975)
			(1.0,0.1975)
		};
		\addplot[dashed, black, line width=1.2pt] coordinates {
			(0,0.0625)
			(0.903614457831325,0.0625)
		};
	
		\addplot[dashed, red, line width=1.2pt] coordinates {
			(0,-0.1975)
			(1.0,-0.1975)
		};
		\addplot[dashed, black, line width=1.2pt] coordinates {
			(0,-0.0625)
			(1.0,-0.0625)
		};
		
		\legend{$\tilde{\rho}_{12}$, $\tilde{\rho}_{13}$, $\tilde{\rho}_{23}$, $\tilde{\rho}_{12 | 3}$, $\tilde{\rho}_{13 | 2}$, $n=100$, $n=1000\;$};
		\end{axis}
		\end{tikzpicture}
		\caption{Partial correlations in the random measurement error model in Figure \ref{fig:random_me_model}. The dotted lines represent the critical values for the correlation at a significance level of $\alpha=5\%$ for different sample sizes. The parameter settings were $\beta_{12}=0.6$, $\beta_{23}=1.2$ and all noise variables had variance $1.0$.}
		\label{fig:random measurement error}
		\vspace{-8pt}
	\end{figure}
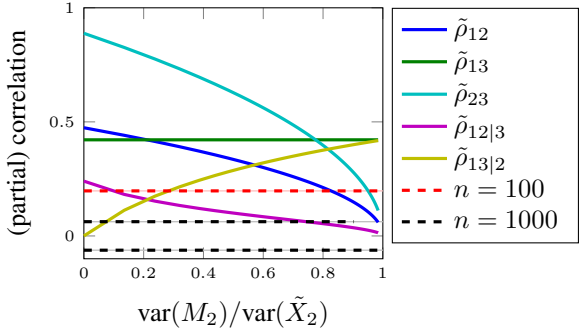

	\section{ERROR BOUND DETECTION}
	\label{sec:upperbound}
	Recall that the true covariances of $D$ random variables $\Sigma$, measurements $\tilde{\Sigma}$ and random measurement errors $\Sigma_{\boldsymbol{M}}$ are related as follows:
	\begin{equation*}
	\Sigma_{\boldsymbol{M}}=\tilde{\Sigma} - \Sigma = \text{diag}(m_1,\ldots,m_D)
	\end{equation*}
	where $m_1,\ldots,m_D>0$ are the variances of the random measurement error associated with each variable. In this section we show how, under certain conditions, an upper bound for the variance of random measurement error can be obtained from observational data with random measurement error. 
	
	\begin{remark}
		Given an (unbiased) estimate of $\Sigma_{\boldsymbol{M}}$, we can simply adjust the covariance matrix $\tilde{\Sigma}$ as suggested by \citet{Pearl}. In practice such an estimate of the covariance matrix of measurement error may not be available.
	\end{remark}
	
	We consider latent random variables $X_1,\ldots,X_4$ and their corresponding measurements $\tilde{X}_1,\ldots,\tilde{X}_4\in\boldsymbol{V}$ with true covariance matrices $\Sigma$ and $\tilde{\Sigma}$ respectively. Our upper bound result relies on Lemma \ref{lemma:dsep pattern} which is due to \citet{Silva2001} and gives conditions\footnote{These conditions are known as \emph{tetrad conditions} in the literature, see \citet{Bollen1989, Sullivant2010, Drton2006,Sullivant2010}} under which there exists a latent variable that d-separates the measured variables $\tilde{X}_1,\ldots,\tilde{X}_4$.

	\begin{lemma}
		\label{lemma:dsep pattern}
		Let $X_1,\ldots, X_4$ be variables in a linear-Gaussian model and let $\tilde{X}_1,\ldots,\tilde{X}_4$ be their measurements with random measurement error. If the correlations satisfy $\tilde{\rho}_{ij}\neq 0$ for all $i,j\in\{1,\ldots,4\}$ and $\tilde{\Sigma}_{12}\tilde{\Sigma}_{34}=\tilde{\Sigma}_{13}\tilde{\Sigma}_{24}=\tilde{\Sigma}_{14}\tilde{\Sigma}_{23}$, then there exists a node $L$ in the true underlying DAG such that $\tilde{X}_i\dsep \tilde{X}_j\given L$ for all $i\neq j\in\{1,\ldots,4\}$.
	\end{lemma}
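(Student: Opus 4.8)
The plan is to exploit the fact that random measurement error perturbs only the diagonal of the covariance matrix, reduce the statement to a purely latent claim about $X_1,\ldots,X_4$, and then apply the classical tetrad characterization of single-factor structure.

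First I would note that, since each measurement node $\tilde{X}_i = X_i + M_i$ is childless with the two parents $X_i$ and $M_i$, and the errors $M_1,\ldots,M_4$ are mutually independent and independent of the $X$-subsystem, the off-diagonal entries of the covariance are unaffected by measurement error: $\tilde{\Sigma}_{ij} = \Sigma_{ij}$ for all $i \neq j$. Hence both hypotheses---the nonvanishing correlations $\tilde{\rho}_{ij} \neq 0$ and the three tetrad equalities $\tilde{\Sigma}_{12}\tilde{\Sigma}_{34} = \tilde{\Sigma}_{13}\tilde{\Sigma}_{24} = \tilde{\Sigma}_{14}\tilde{\Sigma}_{23}$---are equivalent to the same statements for the true covariance matrix $\Sigma$ of the latent variables $X_1,\ldots,X_4$.

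Next I would reduce the desired d-separation statement about the measurements to one about the latent variables. Since every $\tilde{X}_i$ is a leaf whose only non-error parent is $X_i$, any path in $\C{G}$ connecting $\tilde{X}_i$ to $\tilde{X}_j$ must leave $\tilde{X}_i$ through $X_i$ and enter $\tilde{X}_j$ through $X_j$, and the isolated error sources $M_i$ cannot open a new route. Therefore, for any node $L$ lying in the $X$-subgraph, $\tilde{X}_i \dsep \tilde{X}_j \given L$ holds if and only if $X_i \dsep X_j \given L$. It thus suffices to produce a single node $L$ that d-separates every pair $X_i, X_j$.

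Finally I would invoke the tetrad characterization. Each tetrad equality is the vanishing of the determinant of one of the three $2\times 2$ off-diagonal submatrices of $\Sigma$, that is, a rank-one condition on the cross-covariance across a $2$–$2$ split of $\{X_1,\ldots,X_4\}$. By the trek-separation theorem for linear-Gaussian models \citep{Sullivant2010}, each such rank-one condition yields a single choke point separating the corresponding split, while the assumption $\tilde{\rho}_{ij}\neq 0$ guarantees that a trek connects every pair and so rules out degenerate or disconnected configurations. Combining the three choke points obtained from the three simultaneous tetrad equalities forces them to coincide in one common node $L$ that separates, hence d-separates, all six pairs; this is precisely the content of the lemma of \citet{Silva2001}, on which I would rely for the combinatorial step. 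The main obstacle is exactly this last step: showing that three distinct vanishing tetrads cannot be realized by three distinct choke points but must collapse to a single common cause $L$. The nonvanishing-correlation hypothesis is what excludes the degenerate alternatives and makes this collapse unavoidable.
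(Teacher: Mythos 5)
Your proposal is correct and ultimately takes the same route as the paper, whose entire proof of this lemma is the citation to \citet{Silva2001}: the crucial combinatorial step---that three simultaneously vanishing tetrads with nonzero pairwise correlations force a \emph{single} choke point $L$ d-separating all six pairs---is exactly what you, like the paper, defer to that cited lemma. Your added reductions (off-diagonal entries of $\tilde{\Sigma}$ equal those of $\Sigma$, and the leaf-node argument transferring d-separation between $\tilde{X}_i$ and $X_i$) are valid but inessential, since Silva et al.'s result applies directly to any four pairwise-correlated variables in the true DAG, including the measurement nodes themselves.
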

	\begin{proof}
		The proof can be found in \citet{Silva2001}.
	\end{proof}
	
	When there exists a node $L$ that d-separates $\tilde{X}_1,\ldots,\tilde{X}_4$, then the causal graph and latent structure are represented by the causal graph in Figure \ref{fig:upperbound pattern}. This follows from the fact that the variables with measurement error $\tilde{X}_i$ can only have incoming arrows from $X_i$ and $M_i$ and never have any outgoing arrows. Because $L$ d-separates all $\tilde{X}_i$ there can be no collider at $L$.
		
	Before we present our upper bound result, we introduce an adjusted covariance matrix:
	\begin{equation*}
	\tilde{\Sigma}(u,j) = \tilde{\Sigma} - u\, \text{diag}(e_j),
	\end{equation*}
	where $j\in\{1,2,3,4\}$ and $e_j$ is a standard basis vector. For all $u$ such that $\tilde{\Sigma}(u,j)$ is a valid covariance matrix, the adjusted partial correlations $\tilde{\rho}^u_{ik|j}$ may be calculated from $\tilde{\Lambda}(u,j)=(\tilde{\Sigma}(u,j))^{-1}$ as follows:
	\begin{equation}
	\label{eq:adjusted partial correlation}
	\tilde{\rho}^u_{ik|j} = -\frac{(\tilde{\Lambda}(u,j))_{ik}}{\sqrt{(\tilde{\Lambda}(u,j))_{ii} (\tilde{\Lambda}(u,j))_{kk}}}.
	\end{equation}	
	
	\begin{figure}
		\centering
		\begin{tikzpicture}[->,>=stealth,shorten >=1pt,auto,node distance=1.2cm,
		semithick,square/.style={regular polygon,regular polygon sides=4},el/.style = {inner sep=2pt, align=left, sloped},
		every label/.append style = {font=\tiny}]
		
		\node[state, fill=lightgray] (X1m) {$\tilde{X}_1$};
		\node[state] (X1) [right of=X1m] {$X_1$};
		\node[state] (L) [below=0.6cm of X1] {$L$};
		\node[state] (X3) [below of=L] {$X_3$};
		\node[state] (X2) [left of= X3] {$X_2$};
		\node[state] (X4) [right of=X3] {$X_4$};
		\node[state, fill=lightgray] (X2m)[below of=X2] {$\tilde{X}_2$};
		\node[state, fill=lightgray] (X3m)[below of=X3] {$\tilde{X}_3$};
		\node[state, fill=lightgray] (X4m)[below of=X4] {$\tilde{X}_4$};
		
		\path 	(X1) edge node[el] {} (X1m);
		\path 	(X2) edge node[el] {} (X2m);
		\path 	(X3) edge node[el] {} (X3m);
		\path 	(X4) edge node[el] {} (X4m);
		\path 	(L) edge node[el] {} (X2);
		\path 	(L) edge node[el] {} (X3);
		\path 	(L) edge node[el] {} (X4);
		\path 	(X1) edge[bend left, dashed] node[el] {} (L);
		\path 	(L) edge[bend left, dashed] node[el] {} (X1);
		\path 	(L) edge[dashed, style={<->}] node {} (X1);
		
		\end{tikzpicture}
		\caption{Causal graph of upper bound pattern for model with random measurement error, where at least one of the dashed edges is present. The indexes $1,\ldots 4$ can be permuted. Noise variables $E_1,\ldots,E_4$ may be present but are not drawn. Measurement errors $M_1,\ldots,M_4$ are present but not drawn.}
		\label{fig:upperbound pattern}
		\vspace{-8pt}
	\end{figure}
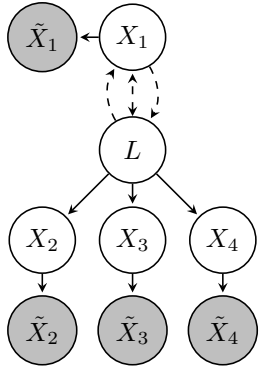
	
	Theorem \ref{thm:upperbound pattern} shows how the adjusted partial correlation is related to the underlying causal graph in Figure \ref{fig:upperbound pattern}. Corollary \ref{cor:measurement error bound} shows how we can use adjusted partial correlations to find an upper bound for the measurement error on one variable. 
	\newpage
	\begin{theorem}
		\label{thm:upperbound pattern}
		Let $X_1,\ldots, X_4$, $\tilde{X}_1,\ldots,\tilde{X}_4$ and $\tilde{\rho}_{ij}$ be as in Lemma \ref{lemma:dsep pattern}. The true underlying DAG is as in Figure \ref{fig:upperbound pattern} if and only if there exists $u>0$ such that $\tilde{\rho}^u_{13|2}=\tilde{\rho}^u_{14|2}=\rho^u_{34|2}=0$.
	\end{theorem}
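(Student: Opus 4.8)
The plan is to prove both implications by rewriting the vanishing of the adjusted partial correlations as polynomial conditions on the entries of $\tilde{\Sigma}$, and then bridging those conditions to the d-separation statement supplied by Lemma \ref{lemma:dsep pattern}. For the forward direction, assume the true DAG is as in Figure \ref{fig:upperbound pattern}. By Lemma \ref{lemma:dsep pattern} the node $L$ d-separates all four measurements, so in particular $\tilde{X}_1\dsep\tilde{X}_3\given L$, $\tilde{X}_1\dsep\tilde{X}_4\given L$, and $\tilde{X}_3\dsep\tilde{X}_4\given L$. Since index $2$ is a pure child of $L$, I would write $\tilde{X}_2=\gamma L+R$ with $\gamma\neq 0$ and $R=E_2+M_2$ uncorrelated with $\{L,\tilde{X}_1,\tilde{X}_3,\tilde{X}_4\}$. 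Taking $u^{*}=\text{var}(R)=\text{var}(E_2)+\text{var}(M_2)\geq\text{var}(M_2)>0$, the adjusted matrix $\tilde{\Sigma}(u^{*},2)$ is exactly the covariance matrix of $(\tilde{X}_1,\gamma L,\tilde{X}_3,\tilde{X}_4)$: the second diagonal entry drops to $\gamma^2\text{var}(L)$, while every cross-covariance in row and column $2$ is unchanged because $R$ is uncorrelated with the other measurements. This is a valid covariance matrix, and since $\gamma L$ is a deterministic rescaling of $L$, conditioning on the second coordinate coincides with conditioning on $L$; hence the three adjusted partial correlations equal the conditional correlations given $L$, which vanish by the d-separations above.

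For the reverse direction, suppose there is a valid $u>0$ with $\tilde{\rho}^u_{13|2}=\tilde{\rho}^u_{14|2}=\tilde{\rho}^u_{34|2}=0$. Expressing each quantity through \eref{eq:adjusted partial correlation} and clearing denominators, the vanishing of the relevant cofactors of the adjusted $3\times3$ submatrices gives
\begin{align*}
\tilde{\Sigma}_{13}(\tilde{\Sigma}_{22}-u)&=\tilde{\Sigma}_{12}\tilde{\Sigma}_{23}, &
\tilde{\Sigma}_{14}(\tilde{\Sigma}_{22}-u)&=\tilde{\Sigma}_{12}\tilde{\Sigma}_{24}, &
\tilde{\Sigma}_{34}(\tilde{\Sigma}_{22}-u)&=\tilde{\Sigma}_{23}\tilde{\Sigma}_{24}.
\end{align*}
Because all $\tilde{\rho}_{ij}\neq 0$, the off-diagonal entries are nonzero, so I can eliminate the common factor $(\tilde{\Sigma}_{22}-u)$ pairwise between these equations; this yields precisely the tetrad equalities $\tilde{\Sigma}_{12}\tilde{\Sigma}_{34}=\tilde{\Sigma}_{13}\tilde{\Sigma}_{24}=\tilde{\Sigma}_{14}\tilde{\Sigma}_{23}$. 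Lemma \ref{lemma:dsep pattern} then guarantees a node $L$ in the true DAG that d-separates $\tilde{X}_1,\dots,\tilde{X}_4$, and by the structural observation already recorded after the lemma—each $\tilde{X}_i$ is childless, so $L$ cannot be a collider on any connecting path—the true DAG must be as in Figure \ref{fig:upperbound pattern}.

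The main obstacle I anticipate is making the reverse direction airtight. The elimination step must recover \emph{all three} tetrad equalities rather than a single one, and the nonvanishing-denominator bookkeeping (and the role of the common, shared value of $u$ across the three equations) must be tracked carefully; more fundamentally, the passage from the purely covariance-algebraic tetrad conditions to an actual latent node $L$ in the causal graph is exactly the nontrivial content outsourced to Lemma \ref{lemma:dsep pattern}. In the forward direction the delicate point is justifying that $\tilde{\Sigma}(u^{*},2)$ coincides with a genuine covariance matrix whose second coordinate is a deterministic rescaling of $L$, so that conditioning on it is equivalent to conditioning on $L$; this relies on $R=E_2+M_2$ being uncorrelated with the other three measurements, which in turn uses that index $2$ is a pure child of $L$ in Figure \ref{fig:upperbound pattern}.
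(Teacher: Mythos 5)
Your proposal is correct and is essentially the paper's own proof: the reverse direction reproduces exactly the paper's three cofactor equations and the pairwise elimination of $(\tilde{\Sigma}_{22}-u)$ to recover the tetrad equalities before invoking Lemma \ref{lemma:dsep pattern} and the childless-measurement-node observation, while the forward direction uses the same witness $u^{*}=\mathrm{var}(E_2)+\mathrm{var}(M_2)$ (the paper's $\tau+m_2$), with your identification of $\tilde{\Sigma}(u^{*},2)$ as the covariance matrix of $(\tilde{X}_1,\gamma L,\tilde{X}_3,\tilde{X}_4)$ being just a slicker packaging of the paper's explicit computation via $\Sigma_{L1}\Sigma_{L3}-\Sigma_{13}\Sigma_{LL}=0$. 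One cosmetic slip: in the forward direction the d-separations by $L$ follow directly from the structure in Figure \ref{fig:upperbound pattern}, not from Lemma \ref{lemma:dsep pattern}, which runs in the opposite direction (tetrads to d-separation).
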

	\begin{proof}
		The proof can be obtained by explicitly calculating the adjusted partial correlations and applying Lemma \ref{lemma:dsep pattern}. A complete proof can be found in the supplementary material.
	\end{proof}
	
	\begin{corollary}
		\label{cor:measurement error bound}
		Let $m_2$ be the variance of the measurement error on $X_2$. If $\tilde{\rho}^{u^*}_{13|2}=\tilde{\rho}^{u^*}_{14|2}=\tilde{\rho}^{u^*}_{34|2}=0$ for some ${u^*}>0$ then $m_2\leq {u^*}$.
	\end{corollary}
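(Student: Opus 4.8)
The plan is to read the hypothesis as the existence clause of Theorem~\ref{thm:upperbound pattern} and then track what the vanishing value $u^*$ actually measures. Concretely, if $\tilde{\rho}^{u^*}_{13|2}=\tilde{\rho}^{u^*}_{14|2}=\tilde{\rho}^{u^*}_{34|2}=0$ for some $u^*>0$, then by Theorem~\ref{thm:upperbound pattern} the underlying DAG is the one in Figure~\ref{fig:upperbound pattern}, so there is a latent node $L$ with $L\to X_2$ that d-separates $\tilde{X}_1,\ldots,\tilde{X}_4$. In the linear-Gaussian model I would write $X_2=\lambda L+E_2$, and hence $\tilde{X}_2=\lambda L + E_2 + M_2$, where $E_2\indep L$ is the structural noise of $X_2$ and $M_2$ is the independent measurement error with $\mathrm{var}(M_2)=m_2$. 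Denoting the idiosyncratic (non-$L$) variance of $\tilde{X}_2$ by $\psi_2$, the single bookkeeping identity I need is $\psi_2=\mathrm{var}(E_2)+m_2$, since $E_2$ and $M_2$ are independent.

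The crux is to show $u^*=\psi_2$. The adjustment $\tilde{\Sigma}(u,2)=\tilde{\Sigma}-u\,\mathrm{diag}(e_2)$ replaces the idiosyncratic variance $\psi_2$ by $\psi_2-u$ while leaving every off-diagonal entry $\tilde{\Sigma}_{ij}$ ($i\neq j$) unchanged. Using \eqref{eq:adjusted partial correlation} I would compute $\tilde{\rho}^u_{13|2}$, $\tilde{\rho}^u_{14|2}$ and $\tilde{\rho}^u_{34|2}$ explicitly in this one-factor covariance; each is, up to a strictly positive factor (using $\tilde{\rho}_{ij}\neq0$ so that all $L$-loadings are non-zero), a monotone function of $u$ that vanishes precisely when the residual variance $\psi_2-u$ is driven to zero. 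This is exactly the computation behind the proof of Theorem~\ref{thm:upperbound pattern}, so I would reuse it rather than repeat it. Conceptually: at $u=\psi_2$ the adjusted $\tilde{X}_2$ becomes a deterministic multiple of $L$, so conditioning on it coincides with conditioning on $L$ and d-separates $\tilde{X}_1,\tilde{X}_3,\tilde{X}_4$ by Lemma~\ref{lemma:dsep pattern}; for any smaller $u$ a strictly positive residual noise remains, $\tilde{X}_2$ stays an imperfect proxy for $L$, and the three partial correlations cannot all vanish. Hence $u^*=\psi_2$, and this value is unique within the range where $\tilde{\Sigma}(u,2)$ is a valid covariance matrix.

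The conclusion is then immediate: $u^*=\psi_2=\mathrm{var}(E_2)+m_2\ge m_2$ because $\mathrm{var}(E_2)\ge0$. Equivalently, and perhaps more transparently, the adjusted variance is $(\tilde{\Sigma}(u^*,2))_{22}=\mathrm{var}(\lambda L)$, which cannot exceed the true variance $\mathrm{var}(X_2)=\tilde{\Sigma}_{22}-m_2$; rearranging $\tilde{\Sigma}_{22}-u^*=(\tilde{\Sigma}(u^*,2))_{22}\le\tilde{\Sigma}_{22}-m_2$ yields $m_2\le u^*$. The main obstacle I anticipate is not this final inequality but the identification $u^*=\psi_2$: one has to verify that the simultaneous vanishing of all three adjusted partial correlations forces the \emph{entire} residual variance of $\tilde{X}_2$ to be removed (not merely the measurement part $m_2$), together with its uniqueness. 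Once that is in hand, the slack $u^*-m_2=\mathrm{var}(E_2)$ is exactly the structural noise of $X_2$, which explains both why the estimate is an upper bound and why it is tight precisely when $X_2$ is a deterministic function of $L$.
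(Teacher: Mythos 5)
Your proposal is correct and takes essentially the same route as the paper: the paper's proof of Theorem~\ref{thm:upperbound pattern} (in the supplement) shows that under the structure of Figure~\ref{fig:upperbound pattern} the three adjusted partial correlations vanish precisely at the unique value $u = \tau + m_2$, where $\tau$ is the variance of the structural noise $E_2$, which is exactly your identification $u^* = \psi_2 = \mathrm{var}(E_2) + m_2 \geq m_2$. Your only additions are cosmetic---the heuristic reading of $\tilde{\Sigma}(u^*,2)$ as the covariance matrix of $(\tilde{X}_1, \lambda L, \tilde{X}_3, \tilde{X}_4)$ and the explicit uniqueness remark---both of which are consistent with the paper's linear-in-$u$ computation.
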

	\begin{proof}
		Follows from the proof of Theorem \ref{thm:upperbound pattern}.
	\end{proof}

	These results can also be applied in a practical, more general setting. If data is generated from a random measurement error model for variables $\boldsymbol{V}$, we consider subsets of four variables. If we can find an adjustment $u^*$ on the covariance matrix of this subset of variables so that the adjusted partial correlations in Theorem \ref{thm:upperbound pattern} vanish, then Corollary \ref{cor:measurement error bound} implicates that this adjustment is an upper bound for the variance of random measurement error. To ensure that the causal structure of these four variables is as in Figure \ref{fig:upperbound pattern}, we can test for the constraints in Lemma \ref{lemma:dsep pattern} (see \citep{Bollen1989, Silva2001, Thoemmes2018}). When all variables are measured in a similar manner, it may be reasonable to assume that the variance of the measurement error is the same for all variables. Under this assumption, the upper bound for the measurement error can be extended to an upper bound for the measurement error variance on all variables.
	
	\paragraph{Data simulations}
	To empirically test the performance of the upper bound, we simulated $10000$ datapoints for $10000$ random models with causal structures as in Figure \ref{fig:upperbound pattern} with parameters chosen uniformly from the interval $[-1,1]$ and error variances chosen uniformly from the interval $[0.5,1]$. We added random measurement error to each variable (the same variance was used for all variables) and minimized the sum of adjusted partial correlations in Corollary \ref{cor:measurement error bound} to obtain an upper bound. The result in Figure \ref{fig:upperbound} shows that this leads to a correct upper bound on the variance of the measurement error.
	
	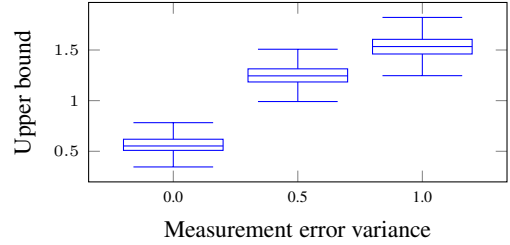
\begin{figure}[]
		\centering
		\begin{tikzpicture}
		\begin{axis}[
		width=0.9\linewidth,
		height=0.5\linewidth,
		boxplot/draw direction=y,
		xlabel = {\small Measurement error variance},
		ylabel = {\small Upper bound},
		xtick={1,2,3},
		xticklabels={0.0, 0.5, 1.0},
		boxplot/variable width,
		]
		\addplot [color=blue, boxplot prepared={draw position=1,
			lower whisker=0.3454, lower quartile=0.5089,
			median=0.5525, upper quartile=0.6190,
			upper whisker=0.7825},
		] coordinates {};
		\addplot [color=blue, boxplot prepared={draw position=2, 
			lower whisker=0.9911, lower quartile=1.1845,
			median=1.2447, upper quartile=1.3140,
			upper whisker=1.5075},
		] coordinates {};
		\addplot [color=blue, boxplot prepared={draw position=3,
			lower whisker=1.2466, lower quartile=1.4607,
			median=1.5340, upper quartile=1.6053,
			upper whisker=1.8217},
		] coordinates {};
		\end{axis}
		\end{tikzpicture}
		\caption{Simulation results for measurement error upper bound detection.}
		\label{fig:upperbound}
		\vspace{-8pt}
	\end{figure}
	
	\section{STRONG FAITHFULNESS}
	\label{sec:strong faithfulness}
	In this section we prove that conditional independences cannot be reliably detected in the presence of measurement error. We then discuss the \emph{strong} faithfulness assumption and its repercussions. In the next section we will present our error correction method, which relies on an upper bound for measurement error and the strong faithfulness assumption.
	
	Lemma \ref{lemma:independence faithfulness} shows that for two dependent (sets of) variables, a conditional independence between these variables can never be detected if the conditioning set is subject to measurement error, \emph{unless} the faithfulness assumption is violated.
	
	\begin{lemma}
		\label{lemma:independence faithfulness}
		Let $\B{X},\B{Y}$ and $\tilde{\B{Z}}$ be three sets of (disjoint) variables. If $\tilde{\B{Z}}$ has measurement error with non-zero variance, then the (in)dependences
		\begin{equation*}
		\B{X}\dep\B{Y} \qquad \B{X}\indep\B{Y}|\tilde{\B{Z}},
		\end{equation*}
		must be due to a violation of the faithfulness assumption.
	\end{lemma}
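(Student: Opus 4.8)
The plan is to carry out the entire argument at the level of d-separation in the underlying DAG, and to conclude by exhibiting a conditional independence that is not matched by a d-separation --- which is exactly what a violation of Causal Faithfulness means. The one structural fact I would lean on is the one emphasized in Section \ref{sec:me}: each measurement variable $\tilde{Z}_i\in\tilde{\B{Z}}$ is childless, its only parents being the true value $Z_i$ and the independent error source $M_i$. In graphical terms, every node of $\tilde{\B{Z}}$ is a \emph{sink}.

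First I would translate the hypothesis $\B{X}\dep\B{Y}$ into a graphical statement. By the Causal Markov Property (Assumption \ref{ass:common}), d-separation implies independence, so its contrapositive gives $\B{X}\not\dsep\B{Y}$ marginally; hence there exists a d-connecting path $\pi$ between some $X\in\B{X}$ and $Y\in\B{Y}$ that is active given the empty set. The key observation at this stage is that a path active given $\emptyset$ can contain no collider: an unconditioned collider always blocks, since neither the collider nor any of its descendants lies in $\emptyset$. Thus every interior node of $\pi$ is a non-collider (a chain or a fork).

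Next I would show that conditioning on $\tilde{\B{Z}}$ leaves $\pi$ active. Because each element of $\tilde{\B{Z}}$ is a sink, it can never appear as a non-collider on any path: being a non-collider ($\to n\to$, $\leftarrow n\leftarrow$, or $\leftarrow n\to$) requires an outgoing edge, which a sink lacks. Since $\tilde{\B{Z}}$ is disjoint from $\B{X}$ and $\B{Y}$, no node of $\tilde{\B{Z}}$ is an endpoint of $\pi$, so any element of $\tilde{\B{Z}}$ lying on $\pi$ would have to be an interior collider; but $\pi$ has none. Therefore $\pi$ avoids $\tilde{\B{Z}}$ entirely, and conditioning on $\tilde{\B{Z}}$ neither blocks a non-collider of $\pi$ nor needs to reactivate a (nonexistent) collider. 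Hence $\pi$ remains active, giving $\B{X}\dcon\B{Y}\given\tilde{\B{Z}}$.

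Finally, combining $\B{X}\dcon\B{Y}\given\tilde{\B{Z}}$ with the hypothesized $\B{X}\indep\B{Y}\given\tilde{\B{Z}}$ yields an independence unaccompanied by a d-separation, which is precisely a violation of Causal Faithfulness (Assumption \ref{ass:common}). The delicate part --- and the heart of the proof --- is the structural sink-node observation together with the remark that a path active given the empty set is collider-free; everything else is routine bookkeeping with the d-separation criterion, and the passage from single variables to the disjoint sets $\B{X},\B{Y},\tilde{\B{Z}}$ is immediate, since d-connection between sets reduces to d-connection between representative elements.
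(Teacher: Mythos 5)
Your proposal is correct and follows essentially the same route as the paper's proof: by the Markov assumption the marginal dependence $\B{X}\dep\B{Y}$ yields a d-connecting path given the empty set, and since every node of $\tilde{\B{Z}}$ is a sink (leaf), conditioning on $\tilde{\B{Z}}$ cannot block that path, so $\B{X}\dcon\B{Y}\given\tilde{\B{Z}}$ and the stated independence must be a faithfulness violation. Your version merely spells out the details the paper leaves implicit (the marginally active path is collider-free, sinks could only occur as interior colliders, hence the path avoids $\tilde{\B{Z}}$), which is a welcome clarification but not a different argument.
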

	\begin{proof}
		A faithfulness violation occurs when $\B{X}\indep\B{Y}|\tilde{\B{Z}}$ but $\tilde{\B{Z}}$ does not d-separate $\B{X}$ and $\B{Y}$. Since $\B{X}$ and $\B{Y}$ are dependent in the data there must be an open path between them by the Markov assumption. By definition of random measurement error the variables in $\tilde{\B{Z}}$ are leaf nodes. Therefore $\tilde{\B{Z}}$ cannot block the path between $\B{X}$ and $\B{Y}$, so that $\B{X}\dcon\B{Y}|\tilde{\B{Z}}$.
	\end{proof}

	Under the assumption that all variables in the model have the same measurement error variance (e.g. because they are subject to the same source of measurement error), the variance of the measurement error must be zero whenever a marginal dependence and a conditional independence is detected, as shown in Proposition \ref{prop:no measurement error}. 
	
	\begin{proposition}
		\label{prop:no measurement error}
		Let $\tilde{\B{X}},\tilde{\B{Y}}$ and $\tilde{\B{Z}}$ be three sets of (disjoint) variables with measurement errors that have equal (possibly zero) variances. Under the faithfulness assumption, if $\tilde{\B{X}}\dep\tilde{\B{Y}}$ and $\tilde{\B{X}}\indep\tilde{\B{Y}}|\tilde{\B{Z}}$, then the measurement error on all variables has zero variance.
	\end{proposition}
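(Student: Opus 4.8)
The plan is to argue by contradiction, reducing the claim to a direct application of Lemma~\ref{lemma:independence faithfulness}. Write $m\geq 0$ for the common variance of the measurement error on every variable; such a common value exists precisely because of the equal-variance hypothesis. The goal is to show $m=0$, so I would suppose instead that $m>0$. Then in particular the conditioning set $\tilde{\B{Z}}$ carries measurement error with non-zero variance, which is exactly the hypothesis needed to invoke Lemma~\ref{lemma:independence faithfulness}.

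Applying that lemma with the roles $\B{X}:=\tilde{\B{X}}$ and $\B{Y}:=\tilde{\B{Y}}$, the observed pattern $\tilde{\B{X}}\dep\tilde{\B{Y}}$ together with $\tilde{\B{X}}\indep\tilde{\B{Y}}\mid\tilde{\B{Z}}$ can only arise through a violation of faithfulness. This contradicts the standing assumption that faithfulness holds. Hence the supposition $m>0$ is untenable and $m=0$. Since $m$ is, by hypothesis, the variance of the measurement error on \emph{all} of the variables, the measurement error on every variable then has zero variance, which is exactly the claim. The equal-variance assumption thus does the single job of transferring the conclusion ``$\tilde{\B{Z}}$ has no measurement error'' to all remaining variables.

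The one point that needs care, and which I expect to be the main obstacle, is that Lemma~\ref{lemma:independence faithfulness} is stated with only the conditioning set $\tilde{\B{Z}}$ carrying measurement error, whereas here $\tilde{\B{X}}$ and $\tilde{\B{Y}}$ are themselves measured with error. I would therefore check that the lemma's proof goes through verbatim in this situation. The marginal dependence $\tilde{\B{X}}\dep\tilde{\B{Y}}$ still guarantees, by the Markov property, a \emph{collider-free} active path $\pi$ between some variable of $\tilde{\B{X}}$ and some variable of $\tilde{\B{Y}}$; such a path may have measurement leaves as its two endpoints, but this is immaterial since endpoints play no blocking role. Because the variables of $\tilde{\B{Z}}$ are childless measurement nodes, none of them can lie on $\pi$ as a non-collider, so conditioning on $\tilde{\B{Z}}$ cannot block $\pi$; and since $\pi$ has no colliders at all, there is no collider for $\tilde{\B{Z}}$ to open. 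Thus $\pi$ remains active and $\tilde{\B{X}}\dcon\tilde{\B{Y}}\mid\tilde{\B{Z}}$, contradicting the d-separation $\tilde{\B{X}}\dsep\tilde{\B{Y}}\mid\tilde{\B{Z}}$ that faithfulness would force from the conditional independence.

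The only degenerate case to dispose of is $\tilde{\B{Z}}=\emptyset$, for which the two stated hypotheses ($\tilde{\B{X}}\dep\tilde{\B{Y}}$ and $\tilde{\B{X}}\indep\tilde{\B{Y}}\mid\tilde{\B{Z}}$) coincide and are directly contradictory, so there is nothing to prove. Otherwise the contradiction above completes the argument, and I anticipate no further difficulty.
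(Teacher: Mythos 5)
Your proof is correct and takes essentially the same route as the paper, whose entire proof is that the proposition ``follows directly from Lemma~\ref{lemma:independence faithfulness}'': your contradiction argument (a positive common variance puts nonzero measurement error on $\tilde{\B{Z}}$, so the observed pattern would force a faithfulness violation) is precisely that direct application. Your additional checks --- that the lemma's collider-free-path argument is unaffected by measurement error on $\tilde{\B{X}}$ and $\tilde{\B{Y}}$, and the degenerate case $\tilde{\B{Z}}=\emptyset$ --- are sound and merely make explicit what the paper leaves implicit.
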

	\begin{proof}
		Follows directly from Lemma \ref{lemma:independence faithfulness}.
	\end{proof}
	
	Since constraint-based causal discovery algorithms rely both on the faithfulness assumption and on the results of conditional independence tests, poor performance is to be expected when variables are measured with error. In this article, we consider the \emph{strong faithfulness} assumption \citep{Spirtes2000} instead.
	
	\begin{assumptions}{(\emph{Strong faithfulness})}
		We assume that the data of the unobserved measurement-error-free variables is $\lambda$-strong faithful to the true underlying causal graph that generated it. That is, for all disjoint sets of variables $\B{X},\B{Y},\B{Z}$:
		\begin{equation*}
		|\rho_{\B{X},\B{Y}\given \B{Z}}| < \lambda \implies \B{X} \dsep \B{Y} \given \B{Z}.
		\end{equation*}
	\end{assumptions}
	
	The example in Figure \ref{fig:random measurement error} illustrates how the strong faitfhulness assumption may alleviate some of the negative effects of measurement error, but may aggravate the risk of detecting wrong conditional independences. If the data is $\lambda$-strong faitfhul, then it is also $\mu$-strong faithful, where $0<\mu\leq\lambda$, and $\mu$ can then be treated as a tuning parameter. In Figure \ref{fig:random measurement error}, for zero relative measurement error, the data is $\mu$-strong faithful for any $\mu$ up to $\lambda\sim 0.25$. For $\mu=0.25$ we find from the partial correlations that $X_1\indep \tilde{X}_3\given \tilde{X}_2$ upto a relative measurement error of approximately $0.3$, but for large enough measurement error we may also wrongly detect that $X_1\indep \tilde{X}_2\given X_3$.\footnote{Small enough correlations correspond to d-separations in the underlying graph by the strong faithfulness assumption. By the causal Markov assumption, d-separations correspond to conditional independences.}
		
	The tuning parameter thus represents a trade-off between detecting as many as possible of the true conditional independences and wrongfully detecting conditional independences. For the identification of LCD triples this means that for small $\mu$ and data that is corrupted by measurement error, we cannot detect the true LCD triples, while for large $\mu$ we may detect false LCD triples, because we detect conditional independences between variables that are actually dependent.	
	
	\section{ERROR PROPAGATION}
	\label{sec:error propagation}

	In this section we consider propagation of an error bound on random measurement error to partial correlations. If the strong faithfulness assumption holds, the effectiveness of tuning the threshold parameter $\lambda$ depends on the size of the measurement error. By taking measurement error into account, we aim to alleviate the adverse effect of wrongfully detecting conditional independences by including the possibility to adaptively assign `unkown' to a statistical test result. In that case we could get the best of both worlds: detect the correct conditional independences and assign `unknown' or `dependent' to the conditional dependences.		
	
	We start by defining an adjusted covariance matrix for three variables. Let $\boldsymbol{m}=(m_1,m_2,m_3)$ be the variances of the random measurement errors $(M_1,M_2,M_3)$ on the latent (unmeasured) variables $(X_1,X_2,X_3)$, and suppose that $\boldsymbol{u}^*=(u^*_1,u^*_2,u^*_3)$ is an upper bound such that $\boldsymbol{m} \preceq \boldsymbol{u}^*$.\footnote{$\preceq$ is the component-wise inequality between two vectors.} Suppose that $\tilde{\Sigma}$ is the true covariance matrix of the measured variables $\tilde{X}_1,\tilde{X}_2,\tilde{X}_3\in\boldsymbol{V}$. The adjusted covariance matrix is given by
	\begin{equation}
	\tilde{\Sigma} (\boldsymbol{u}) = \tilde{\Sigma} - \boldsymbol{u}^T I,
	\end{equation}
	where $I$ denotes the identity matrix, when $\tilde{\Sigma}(\boldsymbol{u})$ has an inverse, otherwise $\tilde{\Sigma}(\boldsymbol{u})=\tilde{\Sigma}$.
	
 	For $0\preceq \boldsymbol{u} \preceq \boldsymbol{u}^*$ we can find minimal and maximal absolute values of partial correlations based on $\tilde{\Lambda}(\boldsymbol{u})=(\tilde{\Sigma}(\boldsymbol{u}))^{-1}$. We define
	\begin{align}
	\tilde{\rho}^{\text{min}}_{12|3} &= \argmin_{0\preceq \boldsymbol{u} \preceq \boldsymbol{u}^*} 
	\left\rvert\frac{(\tilde{\Lambda}(\boldsymbol{u}))_{12}}{\sqrt{(\tilde{\Lambda}(\boldsymbol{u}))_{11} (\tilde{\Lambda}(\boldsymbol{u}))_{22}}}\right\rvert,\\
	\tilde{\rho}^{\text{max}}_{12|3} &= \argmax_{0\preceq \boldsymbol{u} \preceq \boldsymbol{u}^*} \left\rvert\frac{(\tilde{\Lambda}(\boldsymbol{u}))_{12}}{\sqrt{(\tilde{\Lambda}(\boldsymbol{u}))_{11} (\tilde{\Lambda}(\boldsymbol{u}))_{22}}}\right\rvert.
	\end{align}
	
	Under the $\lambda$-strong faithfulness assumption, the conditional (in)dependence relations can be determined as follows:
	\begin{equation}
	\label{eq:propagated test}
	\begin{cases}
	X_1\dep X_2 \given X_3 & \,\text{ if }  \tilde{\rho}^{\text{min}}_{12|3} > \lambda\\
	X_1\indep X_2\given X_3 & \,\text{ if } \tilde{\rho}^{\text{max}}_{12|3} < \lambda,
	\end{cases}.
	\end{equation}
	The nature of the relation is undecided when $\tilde{\rho}^{\text{min}}_{12|3} <\lambda$ and $\tilde{\rho}^{\text{max}}_{12|3}>\lambda$.\footnote{In practical applications the covariance matrix $\tilde{\Sigma}$ is estimated from data. The added uncertainty can be taken into account by using bootstrapping to obtain confidence intervals for $\tilde{\rho}^{\text{min}}_{12|3}$ and $\tilde{\rho}^{\text{max}}_{12|3}$.}
	
	Although we consider a measurement error correction in cases where only one variable is conditioned upon, our ideas can be trivially extended to accommodate larger conditioning sets when an upper bound on the measurement error is known for all variables involved\footnote{In that case one considers a larger adjusted covariance matrix, and since the partial correlations are calculated from the covariance matrix one can use the same scheme to find minimal and maximal values for the absolute partial correlation.}.
	
	\section{DATA SIMULATIONS}
	We now evaluate the effects of a measurement error correction on simulated data. For detailed descriptions of the simulation settings, we refer to the supplementary material.
	
	     \begin{figure*}[]
		\begin{subfigure}[b]{0.44\textwidth}
			\begin{tikzpicture}
			\begin{axis}[
			width=\linewidth,
			height=0.58\linewidth,
			xlabel={\small Measurement Error},
			xmin=0,xmax=1.0,xtick={0,0.2,...,1.0},
			ylabel={\small Error Rate},
			yticklabel style={/pgf/number format/fixed, /pgf/number format/precision=2, /pgf/number format/fixed zerofill},
			ymin=0.0,ymax=0.04,
			ytick={0.0,0.01,0.02,0.03,0.04},
			scaled y ticks=false,
			ylabel shift=-2pt,
			]
			\addplot[blue!90!white, line width=1.2pt] table[x index = {0}, y index = {1}, col sep = comma] {CI_dep.csv};
			\addplot[mycolor3, dotted, line width=1.5pt] table[x index = {0}, y index = {2}, col sep = comma] {CI_dep.csv};
			\addplot[applegreen, line width=1.2pt] table[x index = {0}, y index = {4}, col sep = comma] {CI_dep.csv};
			\addplot[mycolor2, dotted, line width=1.5pt] table[x index = {0}, y index = {3}, col sep = comma] {CI_dep.csv};
			\end{axis}
			\end{tikzpicture}
			\caption{Conditional dependences.}
			\label{fig:cd}
		\end{subfigure}
		\begin{subfigure}[b]{0.44\textwidth}
			\begin{tikzpicture}
			\begin{axis}[
			clip mode=individual,
			width=\linewidth,
			height=0.58\linewidth,
			xlabel={\small Measurement Error},
			xmin=0,xmax=1.0,xtick={0,0.2,0.4,...,1.0},
			ylabel={\small Error Rate},
			ymin=-0.05,ymax=1.0,ytick={0,0.2,...,1.0},
			ylabel shift=-2pt,
			axis on top,
			legend pos=outer north east,
			legend columns=1,
			legend cell align={left}
			]
			\addplot[applegreen, line width=1.2pt] table[x index = {0}, y index = {4}, col sep = comma] {CI_indep.csv};
			\addplot[mycolor2, dotted, line width=1.5pt] table[x index = {0}, y index = {3}, col sep = comma] {CI_indep.csv};
			\addplot[blue!90!white, line width=1.2pt] table[x index = {0}, y index = {1}, col sep = comma] {CI_indep.csv};
			\addplot[mycolor3, dotted, line width=1.5pt] table[x index = {0}, y index = {2}, col sep = comma] {CI_indep.csv};
			\addplot[mycolor1, line width=1.2pt] coordinates {
				(0, -1)
				(1, -1)
			};
			\addplot[dashed, black, line width=1.2pt] coordinates {
				(0, -1)
				(1, -1)
			};
			\addlegendimage{applegreen, line width=1.2pt}                
			\addlegendimage{mycolor2, dotted, line width=1.5pt}
			\addlegendimage{blue!90!white, dotted, line width=1.2pt}
			\addlegendimage{mycolor3, line width=1.5pt}
			\addlegendimage{mycolor1, line width=1.2pt}
			\addlegendimage{line width=1.2pt}
			\legend{$\alpha$, $\lambda$, $t=1.0$, $t=1.5$, $t$, baseline,};
			\end{axis}
			\end{tikzpicture}
			\caption{Conditional independences.}
			\label{fig:ci}
		\end{subfigure}
		\newline
		\begin{subfigure}[b]{0.44\textwidth}
			\begin{tikzpicture}
			\begin{axis}[
			width=\linewidth,
			height=0.58\linewidth,
			xlabel={\small Recall},
			xmin=0.0, xmax=1.01, xtick={0,0.2,0.4,...,1.0},
			ylabel={\small Precision},
			ymin=0.45, ymax=1.01, try min ticks=6,
			ylabel shift=-2pt,
			]
			\addplot[dashed, black, line width=1.2pt] coordinates {
				(0, 0.5)
				(1, 0.5)
			};
			\addplot[blue!90!white, line width=1.2pt] table[x index = {0}, y index = {2}, col sep = comma] {LCD_pr_ub.csv};
			\addplot[applegreen, line width=1.2pt] table[x index = {1}, y index = {3}, col sep = comma] {LCD_pr_ub.csv};
			\addplot[mycolor2, dotted, line width=1.5pt] table[x index = {0}, y index = {1}, col sep = comma] {LCD_pr_lambda.csv};
			\addplot[mycolor3, dotted, line width=1.5pt] table[x index = {0}, y index = {1}, col sep = comma] {LCD_pr_alpha.csv};        
			\end{axis}
			\end{tikzpicture}
			\caption{Application to LCD.}
			\label{fig:pr}
		\end{subfigure}
		\begin{subfigure}[b]{0.44\textwidth}
			\begin{tikzpicture}
			\begin{axis}[
			width=\linewidth,
			height=0.58\linewidth,
			xlabel={\small Recall},
			xticklabel style={/pgf/number format/fixed, /pgf/number format/precision=2, /pgf/number format/fixed zerofill},
			scaled x ticks=false,
			xmin=0.0,xmax=0.3,xtick={0,0.05,0.1,...,0.4},
			ylabel={\small Precision},
			ymin=0.0,ymax=0.4,ytick={0,0.1,...,1.0},
			ylabel shift=-2pt,
			]
			
			\addplot[dashed, black, line width=1.2pt] coordinates {
				(0, 0.016)
				(1, 0.016)
			};
			\addplot[mycolor1, line width=1.2pt] table[x index = {0}, y index = {1}, col sep = comma] {pr_ub_comb.csv};
			\addplot[blue!90!white, line width=1.2pt] table[x index = {0}, y index = {1}, col sep = comma] {pr_ub_comb2.csv};
			\addplot[mycolor3, dotted, line width=1.5pt] table[x index = {0}, y index = {1}, col sep = comma] {pr_ub_comb3.csv};
			\addplot[applegreen, line width=1.2pt] table[x index = {0}, y index = {1}, col sep = comma] {pr_alpha_comb.csv};
			\addplot[mycolor2, dotted, line width=1.5pt] table[x index = {0}, y index = {1}, col sep = comma] {pr_lambda_comb.csv};      
			\end{axis}
			\end{tikzpicture}
			\caption{Upper bound detection and application to LCD.}
			\label{fig:cpr}
		\end{subfigure}
		\caption{Simulation results. Figures \subref{fig:cd} and \subref{fig:ci} show the error rate for detecting conditional dependences and independences in the presence of measurement error for $\lambda$-strong faithful data. It is assumed that $\lambda=0.1$ is known, and $\alpha=0.05$. Figure \subref{fig:pr} shows the precision-recall curve for detecting LCD triples from $\lambda$-strong faithful data subject to measurement error with fixed variance and a given upper bound, where $\lambda$ and $\alpha$ are used as tuning parameters. Figure \subref{fig:cpr} shows the precision-recall curve for simulations of $15$ variables, where we first apply the upper bound detection and then the measurement error correction and $\alpha$ and $\lambda$ are treated as tuning parameters. The baseline is at $0.016$.}
		\vspace*{-4pt}
	\end{figure*}
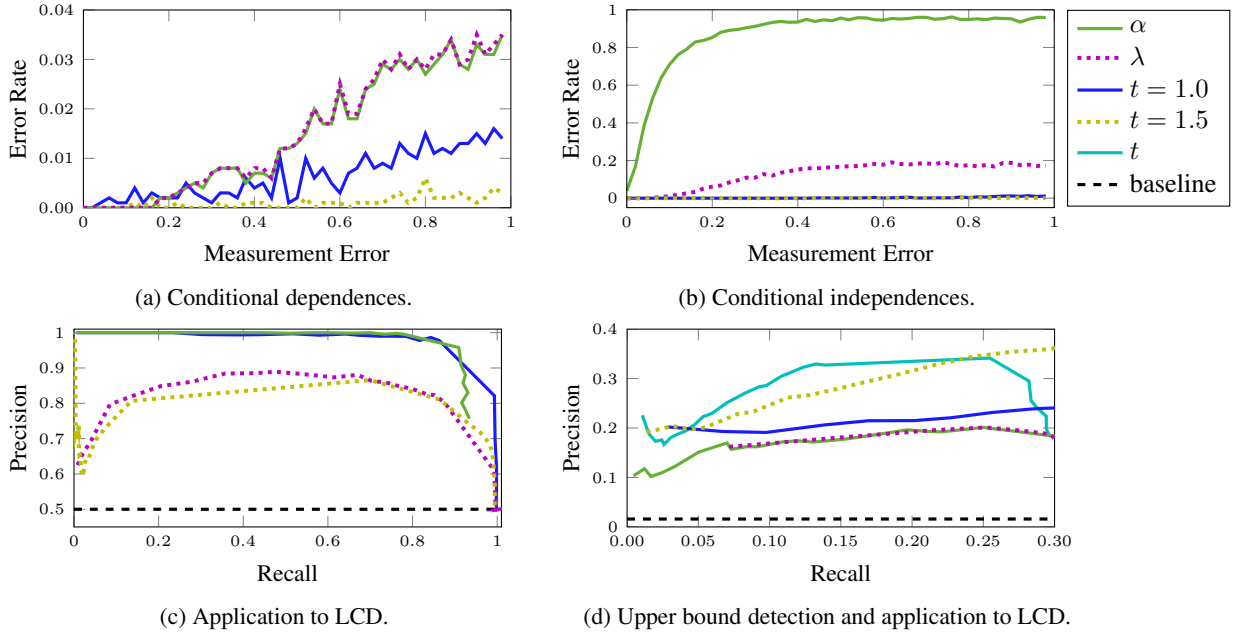

	\subsection{CONDITIONAL INDEPENDENCE TESTING} To illustrate the effectiveness of the measurement error correction in identifying conditional (in)dependence relations, we generated data for three variables $(X_1,X_2,X_3)$ from linear-Gaussian acyclic causal structures, possibly with latent confounders.
	We only considered triples that satisfied the $\lambda$-strong faithfulness assumption for $\lambda=0.1$ and $X_1\dep X_2$ and $X_2\dep X_3$. 
	
	We simulated $2000$ models where half of the models satisfied $X_1\indep X_3\given X_2$. From each model we generated $10000$ samples and added normally distributed random measurement error to each variable with varying variance. The conditional (in)dependence between $\tilde{X}_1\indep \tilde{X}_3\given \tilde{X}_2$ was tested in various ways: using a threshold on the p-value $\alpha=0.05$, using a threshold $\lambda=0.1$ on the partial correlation, and using the same threshold with a measurement error correction with an upper bound on the measurement error of $t$ times the true variance. We then calculated the error rate as the number of incorrect classifications relative to the total number of tests. Note that the amount of conditional (in)dependence relations that are assigned `unknown' increases with the size of the measurement error and the tightness of the upper bound. For an evaluation of the amount of `unknown' classifications we refer to the supplementary material.
	
	Figure \ref{fig:cd} shows that the measurement error correction slightly reduces the error rate for conditional dependences, and \ref{fig:ci} shows that the error rate of detecting incorrect conditional dependences is greatly reduced. 
	
	\subsection{APPLICATION TO LCD}
	Typically, when data is $\lambda$-strong faithful to the true underlying causal graph, the value of $\lambda$ is not known and $\lambda$ is therefore used as a tuning parameter instead. We generated triples $(X_1,X_2,X_3)$ as in the previous simulation, but only selected triples where $X_1$ was not caused by $X_2$ and $X_3$. We added measurement error with a fixed variance. We then applied the LCD algorithm, testing conditional independences as in the previous section. We evaluated the results by checking whether the causal structure of the triples was correctly identified. Figure \ref{fig:pr} shows that the precision of the algorithm with the measurement error corrected test results outperforms the standard methods.
	
	We also consider the more realistic case where multiple variables are measured, the upper bound for the variance of the measurement error is not known in advance, and the data is not necessarily $\lambda$-strong faithful. To that end we simulated $10000$ datapoints from a random acyclic model with $15$ variables, where one variable was not caused by any of the other variables.
	We added measurement error to each variable with a fixed variance.
	
	For the upper bound detection, we first tested whether the tetrad constraints vanished using Wishart's test \citep{Wishart1928} at the $5\%$ level, and then used the result in Corollary \ref{cor:measurement error bound} to obtain an upper bound for the measurement error. When we found multiple upper bounds (for multiple variables) we chose the median as an upper bound for the measurement error on all variables. Finally we applied the LCD algorithm, testing marginal (in)dependences with a t-test at the $5\%$ level and conditional (in)dependences as in the previous experiments and using the detected upper bound. If we were not able to detect an upper bound, we assigned `unknown' to every test result. We checked how often a correct causal structure was identified. In $200$ repetitions of the experiment the upper bound was incorrect in only $3$ cases and no upper bound was detected in $39$ cases. Figure \ref{fig:cpr} shows that all methods score significantly better than the random baseline and that the precision for detecting LCD triples increases significantly when we use the measurement error correction.
			
	\section{PROTEIN SIGNALING NETWORKS}
	We present an application of our ideas to real-world protein signaling data that could be corrupted by measurement error. We used a dataset concerning the influence of protein abundances on the properties of a protein signaling network in human kidney cells \citep{Lun2017}, and obtained an upper bound for the variance of random measurement error from this data. In absence of a reliable ground truth for this experiment, we validated the results of a measurement error correction applied to the LCD algorithm by comparing it to a baseline derived from interventions in the data.
	
	\paragraph{Data description}
	For conditions $j = 1,\ldots,20$ the abundance of a different protein labeled $(\text{GFP})_j$ was over-expressed and then measured \citep{Lun2017}. The abundances of an additional $34$ \emph{phosphorylated} proteins $P_i$ were measured after stimulation of the network. We relabeled conditions $j$ so that over-expression of a protein $(\text{GFP})_j$ corresponds to the measured phosphorylated abundance $P_j$.
	
	The abundance of an over-expressed protein typically differed between cells and not every cell was affected \citep{Lun2017}. Because of the experimental design, $(\text{GFP})_j$ is not caused by the abundance or phosphorylation of the other proteins, which allowed us to treat the abundance of $(\text{GFP})_j$ as an intervention variable.
	
	Typically $\sim 10000$ single cells were measured for each condition. We assume that the data-generating process can be approximated by a linear-Gaussian model after pre-processing. For details about data pre-processing we refer to the supplementary material.
	
	\paragraph{Upper bound detection}
	We considered all proteins under over-expression of the SRC protein, for which strong signaling relations were present (see also \citep{Lun2017}). For all $4$-tuples $((\text{GFP})_{\text{SRC}},P_i,P_j,P_k)$ that were all marginally dependent at the $1\%$ level (using a t-test), we tested whether all three tetrads vanished using Wishart's test at the $5\%$ level. We found that these constraints were satisfied for the $4$-tuple ($(\text{GFP})_{\text{SRC}}$, pS6K, pMAPKAPK2, pMAP2K3).
	
	This allowed us to apply the results presented in Section \ref{sec:upperbound} to obtain an upper bound. The upper bounds for the variance of measurement error that we found were $0.10$ for pS6K, $0.15$ for pMAPKAPK2, and $0.14$ for pMAP2K3.\footnote{Each of the detected other bounds corresponds to adjusting the corresponding variable, as in Corollary \ref{cor:measurement error bound}. Other triples that satisfied the constraints gave similar or (much) higher upper bounds for the measurement error.} Since all proteins were measured with the same device, we assumed that the variance of the measurement error is the same for each variable, so that $0.14$ is a suitable upper bound for the measurement error on any variable. 
	
	Although the detected upper bound was large for weak signals, the proteins with stronger signals typically had variances $>1$, so that the relative amount of measurement error for proteins with strong signaling relations amounted to less than $10\%$.
	
	\paragraph{Baseline}	
	To validate the results of LCD, we created a baseline from the interventions (corresponding to over-expression of certain proteins) in the dataset. A reasonable assumption is that  $(\text{GFP})_j$ is a direct cause of $P_j$, because the higher the abundance of a protein, the more it can be phosphorylated. Under the assumption that over-expression of a protein $P_j$ does not alter the network structure \citep{Lun2017} and that $(\text{GFP})_j$ does not directly cause any of the other proteins $P_i$, with $i \neq j$, we have that $P_j$ is a cause of $P_k$, whenever $(\text{GFP})_j$ and $P_k$ are dependent. 
	
	We constructed a baseline for cause-effect pairs $(P_j,P_k)$, where we considered $7$ phosphorylated proteins $P_j$ that were over-expressed in one of the conditions as cause variables and all $34$ phosphorylated proteins as effect variables. The subset of proteins that was used to construct the baseline follows the recommendations in \citet{Lun2017}. We considered a pair $(P_j, P_k)$ a causal pair, if a t-test indicated that $(\text{GFP})_j$ and $P_k$ were dependent at a level of $10^{-4}$. This resulted in $231$ possible cause-effect pairs, $71\%$ of which were cause-effect pairs in the baseline.
	
	\paragraph{Methods and results}
	We applied the LCD algorithm to the data to identify causal pairs $(P_j,P_k)$ by treating $(\text{GFP})_i$ as an intervention variable for conditions $i\in\{1,\ldots,20\}$, with $i\neq j$ and $i\neq k$. Since central proteins in the network were over-expressed, true causal pairs were expected to appear under multiple conditions. To make our results more robust, we only made a positive prediction if a causal pair was predicted for at least $2$ conditions. We applied three methods of conditional (in)dependence testing in combination with the LCD algorithm: a threshold $\alpha$ on the p-value of t-tests, a threshold $\lambda$ on the absolute value of partial correlations, and a threshold $\lambda$ on partial correlations with a measurement error correction using the upper bound $u=0.14$.	
	
	By treating $\lambda$ and $\alpha$ as tuning parameters and taking the baseline as ground truth, we calculated a precision-recall curve for each method of conditional (in)dependence testing. The results are displayed in Figure \ref{fig:lcd results}, which shows that $\alpha$ and $\lambda-u$ have comparable pr-curves. For this dataset there seems to exist a threshold $\alpha$ that is already able to distinguish between conditional independences and dependences. We can see that both $\lambda-u$ and $\alpha$ significantly outperform random guessing, but $\lambda$ does not. Although the differences between the methods are not significant, this seems to indicate that a measurement-error correction improves the precision at low recall for conditional independence testing with a fixed threshold on (partial) correlations.
	
		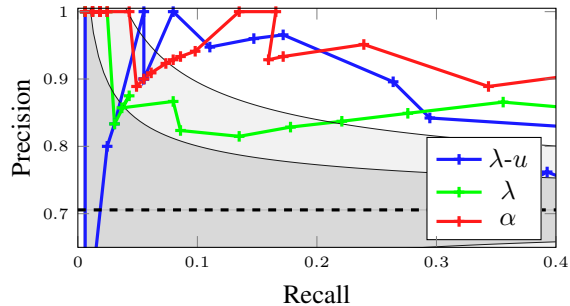
\begin{figure}[t]
			\centering
			\begin{tikzpicture}
			\begin{axis}[
			width=\linewidth,
			height=0.6\linewidth,
			xlabel={Recall},
			xmin=0,xmax=0.4,xtick={0,0.1,...,1},
			ylabel={Precision},
			ymin=0.65,ymax=1.005,ytick={0.6,0.7,...,1},
			ylabel shift=-2pt,
			legend pos=south east,
			axis on top,
			]
			
			\addplot[black!25!white, line width=0.5pt, forget plot] coordinates {
				(0, 0.7056277)
				(1, 0.7056277)
			};	
			
			\addplot[blue!90!white, mark=+, line width=1.2pt] table[x index = {0}, y index = {1}, col sep = comma] {pr_lambda.csv};
			\addplot[green, mark=+, line width=1.2pt] table[x index = {0}, y index = {1}, col sep = comma] {pr_lambda_nc.csv};
			\addplot[red!90!white, mark=+, line width=1.2pt] table[x index = {0}, y index = {1}, col sep = comma] {pr_alpha.csv};
			
			\addplot[black!90!white, line width=0.3pt, forget plot, name path=C] table[x index = {0}, y index = {2}, col sep = comma] {ci_baseline.csv};
			\addplot[black!90!white, line width=0.3pt, forget plot, name path=D] table[x index = {0}, y index = {4}, col sep = comma] {ci_baseline.csv};
			
			\addplot[black!5, forget plot] fill between[of=C and D];
			
			\addplot[black!90!white, line width=0.3pt, forget plot, name path=A] table[x index = {0}, y index = {1}, col sep = comma] {ci_baseline.csv};
			\addplot[black!90!white, line width=0.3pt, forget plot, name path=B 	] table[x index = {0}, y index = {3}, col sep = comma] {ci_baseline.csv};
			
			\addplot[black!15, forget plot] fill between[of=A and B];
			
			\addplot[dashed, black, line width=1.2pt] coordinates {
				(0, 0.7056277)
				(1, 0.7056277)
			};
			
			\addlegendentry{$\lambda$-$u$};
			\addlegendentry{$\lambda$};
			\addlegendentry{$\alpha$};
			\end{axis}
			\end{tikzpicture}
			\caption{LCD applied to protein signaling data with $\alpha$ or $\lambda$ as tuning parameter and a measurement-error correction. The results are compared with the random baseline, the gray-shaded areas represent one and two standard deviations from the random baseline.}
			\label{fig:lcd results}
		\end{figure}	
		
	\section{CONCLUSION}
	
	In this paper we demonstrated that measurement error, when not taken into account, can fool causal discovery methods into wrongfully inserting, deleting or reversing edges in the predicted causal graph. We showed that regular statistical tests with conventional thresholds would fail to detect conditional independences between the uncorrupted variables from the measurement data when measurement error is present. We also proposed a correction method aimed at mitigating the negative effects of measurement error.
	
	The key result that we presented in this work is that, under certain conditions, we can find an upper bound for the variance of the measurement error from data that has been corrupted by measurement error. We show that this uncertainty can be propagated into an uncertainty regarding the partial correlations to correct for measurement error. We showed a successful application of our approach on simulated data.
	
	We also applied our ideas to a real-world protein signaling dataset, and we found an upper bound for the variance of the measurement error in this dataset. We found that our approach gave significantly higher precision than a random baseline. However, also the conventional method without correction for measurement error seems to work well on this dataset. Nevertheless, it is our belief that taking measurement error into account is a promising step towards successful real-world applications of (constraint-based) causal discovery.	
	
	\subsubsection*{Acknowledgements}
	This work was supported by the European Research Council (ERC) under the European Union's Horizon 2020 research and innovation programme (grant agreement 639466). We thank Ioannis Tsamardinos, Sofia Triantafillou and Karen Sachs for providing useful feedback on initial drafts of this work.
	                                                       	
	\clearpage
	\bibliography{library}

\begin{thebibliography}{19}
\providecommand{\natexlab}[1]{#1}
\providecommand{\url}[1]{\texttt{#1}}
\expandafter\ifx\csname urlstyle\endcsname\relax
  \providecommand{\doi}[1]{doi: #1}\else
  \providecommand{\doi}{doi: \begingroup \urlstyle{rm}\Url}\fi

\bibitem[Pearl(2000)]{Pearl2000}
J.~Pearl.
\newblock \emph{Causality: models, reasoning, and inference}.
\newblock Cambridge University Press, 2000.

\bibitem[Spirtes et~al.(2000)Spirtes, Glymour, and Scheines]{Spirtes2000}
P.~Spirtes, C.~Glymour, and R.~Scheines.
\newblock \emph{Causation, prediction, and search}.
\newblock MIT Press, 2000.

\bibitem[Scheines and Ramsey(2016)]{Scheines2016}
R.~Scheines and J.~Ramsey.
\newblock Measurement error and causal discovery.
\newblock \emph{CEUR workshop proceedings}, 1792:\penalty0 1--7, 2016.

\bibitem[Zhang et~al.(2017)Zhang, Gong, Ramsey, Batmanghelich, Spirtes, and
  Glymour]{Zhang2017}
K.~Zhang, M.~Gong, J.~Ramsey, K.~Batmanghelich, P.~Spirtes, and C.~Glymour.
\newblock Causal discovery in the presence of measurement error:
  identifiability conditions.
\newblock In \emph{UAI workshop on causality}, 2017.

\bibitem[Pearl(2010)]{Pearl}
J.~Pearl.
\newblock On measurement bias in causal inference.
\newblock In \emph{Proceedings of the Twenty-Sixth Conference on Uncertainty in
  Artificial Intelligence}, 2010.

\bibitem[Kuroki and Pearl(2014)]{Kuroki2014}
M.~Kuroki and J.~Pearl.
\newblock Measurement bias and effect restoration in causal inference.
\newblock \emph{Biometrika}, 101\penalty0 (2):\penalty0 423--437, 2014.

\bibitem[Silva et~al.(2006)Silva, Scheines, Glymour, and Spirtes]{Silva2001}
R.~Silva, R.~Scheines, C.~Glymour, and P.~Spirtes.
\newblock Learning the structure of linear latent variable models.
\newblock \emph{Journal of Machine Learning Research}, 7:\penalty0 191--246,
  2006.

\bibitem[Bollen(1989)]{Bollen1989}
K.A. Bollen.
\newblock \emph{Structural equations with latent variables}.
\newblock John Wiley {\&} Sons, Inc., 1989.

\bibitem[Sullivant et~al.(2010)Sullivant, Talaska, and Draisma]{Sullivant2010}
S.~Sullivant, K.~Talaska, and J.~Draisma.
\newblock Trek separation for {Gaussian} graphical models.
\newblock \emph{The Annals of Statistics}, 38\penalty0 (3):\penalty0
  1665--1685, 2010.

\bibitem[Harris and Drton(2013)]{Harris2013a}
N.~Harris and M.~Drton.
\newblock {PC}-algorithm for nonparanormal graphical models.
\newblock \emph{Journal of Machine Learning Research}, 14:\penalty0 3365--3383,
  2013.

\bibitem[Triantafillou et~al.(2017)Triantafillou, Lagani, Heinze-Deml, Schmidt,
  Tegner, and Tsamardinos]{Triantafillou2017}
S.~Triantafillou, V.~Lagani, C.~Heinze-Deml, A.~Schmidt, J.~Tegner, and
  I.~Tsamardinos.
\newblock Predicting causal relationships from biological data: applying
  automated causal discovery on mass cytometry data of human immune cells.
\newblock \emph{Scientific reports}, 7\penalty0 (1):\penalty0 12724, 2017.

\bibitem[Kalisch and B{\"u}hlmann(2007)]{Kalisch2007}
M.~Kalisch and P.~B{\"u}hlmann.
\newblock Estimating high-dimensional directed acyclic graphs with the
  {PC}-algorithm.
\newblock \emph{Journal of Machine Learning Research}, 8:\penalty0 613--636,
  2007.

\bibitem[Maathuis et~al.(2010)Maathuis, Colombo, Kalisch, and
  B{\"u}hlmann]{Maathuis2010}
M.~Maathuis, D.~Colombo, M.~Kalisch, and P.~B{\"u}hlmann.
\newblock Predicting causal effects in large-scale systems from observational
  data.
\newblock \emph{Nature Methods}, 7:\penalty0 247--248, 2010.

\bibitem[Cooper(1997)]{Cooper1997}
G.F. Cooper.
\newblock A simple constraint-based algorithm for efficiently mining
  observational databases for causal relationships.
\newblock \emph{Data Mining and Knowledge Discovery}, 1\penalty0 (2):\penalty0
  203--224, 1997.

\bibitem[Drton et~al.(2008)Drton, Massam, and Olkin]{Drton2006}
M.~Drton, H.~Massam, and I.~Olkin.
\newblock Moments of minors of {Wishart} matrices.
\newblock \emph{The Annals of Statistics}, 36\penalty0 (5):\penalty0
  2261--2283, 2008.

\bibitem[Thoemmes et~al.(2018)Thoemmes, Rosseel, and Textor]{Thoemmes2018}
F.~Thoemmes, Y.~Rosseel, and J.~Textor.
\newblock Local fit evaluation of structural equation models using graphical
  criteria.
\newblock \emph{Psychological Methods}, 23\penalty0 (1):\penalty0 27--41, 2018.

\bibitem[Wishart(1928)]{Wishart1928}
J.~Wishart.
\newblock Sampling errors in the variance of two factors.
\newblock \emph{British Journal of Psychology}, 19:\penalty0 180--187, 1928.

\bibitem[Lun et~al.(2017)Lun, Zanotelli, Wade, Schapiro, Tognetti, Dobberstein,
  and Bodenmiller]{Lun2017}
X.~Lun, V.R.T. Zanotelli, J.D. Wade, D.~Schapiro, M.~Tognetti, N.~Dobberstein,
  and B.~Bodenmiller.
\newblock Influence of node abundance on signaling network state and dynamics
  analyzed by mass cytometry.
\newblock \emph{Nature Biotechnology}, 35\penalty0 (2):\penalty0 164--172,
  2017.

\bibitem[Behbehani et~al.(2012)Behbehani, Bendall, Clutter, Fantl, and
  Nolan]{Behbehani2012}
G.K. Behbehani, S.C. Bendall, M.R. Clutter, W.J. Fantl, and G.P. Nolan.
\newblock Single-cell mass cytometry adapted to measurements of the cell cycle.
\newblock \emph{Cytometry A}, 81\penalty0 (7):\penalty0 552--566, 2012.

\end{thebibliography}
	
	\clearpage
	\onecolumn\appendix
	
	\section*{SUPPLEMENT}
	
	\section{Proof of Theorem 1}
	Throughout the proof, we denote the covariance between variables $\tilde{X}_i$ and $\tilde{X}_j$ as $\tilde{\Sigma}_{ij}$ for $i,j\in\{1,2,3,4\}$. First we prove the direction `$\impliedby$':
	\begin{align*}
	&\begin{cases}
	\tilde{\rho}^u_{13|2} = 0 \iff \tilde{\Sigma}_{12}\tilde{\Sigma}_{23} - \tilde{\Sigma}_{13}(\tilde{\Sigma}_{22}-u)=0\\
	\tilde{\rho}^u_{14|2} = 0 \iff \tilde{\Sigma}_{12}\tilde{\Sigma}_{24} - \tilde{\Sigma}_{14}(\tilde{\Sigma}_{22}-u)=0\\
	\tilde{\rho}^u_{34|2} = 0 \iff \tilde{\Sigma}_{32}\tilde{\Sigma}_{24} - \tilde{\Sigma}_{34}(\tilde{\Sigma}_{22}-u)=0
	\end{cases}\\
	&\iff (\tilde{\Sigma}_{22}-u) = \frac{\tilde{\Sigma}_{12}\tilde{\Sigma}_{23}}{\tilde{\Sigma}_{13}} =  \frac{\tilde{\Sigma}_{12}\tilde{\Sigma}_{24}}{\tilde{\Sigma}_{14}} =\frac{\tilde{\Sigma}_{23}\tilde{\Sigma}_{24}}{\tilde{\Sigma}_{34}}\\
	&\iff \frac{\tilde{\Sigma}_{23}}{\tilde{\Sigma}_{13}} = \frac{\tilde{\Sigma}_{24}}{\tilde{\Sigma}_{14}},\qquad \frac{\tilde{\Sigma}_{12}}{\tilde{\Sigma}_{14}} = \frac{\tilde{\Sigma}_{23}}{\tilde{\Sigma}_{34}}\\
	&\iff \tilde{\Sigma}_{12}\tilde{\Sigma}_{34}=\tilde{\Sigma}_{13}\tilde{\Sigma}_{24}=\tilde{\Sigma}_{14}\tilde{\Sigma}_{23}.
	\end{align*}
	The result follows by applying Lemma 1 in the main paper, and observing that these are the only structures for a random measurement model where all d-separations hold.
	
	Now we prove `$\implies$'. Since the true underlying causal graph is as in Figure 1 in the main paper, we have that there is an $\alpha$ such that $\tilde{X}_2 = \alpha L + E_2 + M_2$ for some $\alpha\neq 0$, where $E_2$ is an independent noise variable with variance $\tau$ for $X_2$ and $M_2$ is an independent random measurement error for $\tilde{X}_2$. We let $\Sigma_{ij}$ denote the covariance between variables $X_i$ and $X_j$ for $i,j\in\{1,2,3,4\}$. Covariances between $L$ and variables $(X_1,X_2,X_3,X_4)$ are denoted as $\Sigma_{Li}$ and $\Sigma_{iL}$ for $i\in\{1,2,3,4\}$. Hence
	\begin{equation*}
	\text{Cov}(\tilde{X}_1,\tilde{X}_2,\tilde{X}_3) =
	\begin{pmatrix}
	\Sigma_{11} + m_1 & \alpha \sigma_{L1} & \Sigma_{13}\\
	\alpha \Sigma_{L1} & \alpha^2 \Sigma_{LL} + \tau + m_2 & \alpha \Sigma_{L3}\\
	\Sigma_{13} & \alpha \Sigma_{L3} & \Sigma_{33} + m_3,
	\end{pmatrix}.
	\end{equation*}
	where $m_1,m_2,m_3$ are the variances of $M_1,M_2,M_3$ respectively and $\Sigma_{LL}$ denotes the variance of the latent variable $L$. From this we obtain the relation for the adjusted partial correlation:
	\begin{align*}
	\label{eq:partial correlation x2}
	\tilde{\rho}^u_{13|2} = 0 \qquad
	\iff  \qquad \alpha^2(\Sigma_{L1}\Sigma_{L3} - \Sigma_{13}\Sigma_{LL}) - \Sigma_{13}(\tau + m_2-u) = 0.
	\end{align*} 
	Since $L$ d-separates $X_1$ and $X_3$, the partial correlation $\tilde{\rho}_{13|L}=0$ by the Markov assumption. Therefore
	\begin{equation*}
	\Sigma_{L1}\Sigma_{L3} - \Sigma_{13}\Sigma_{LL} = 0.
	\end{equation*}
	Because $\Sigma_{13}\neq 0$ by assumption, we find that $\tilde{\rho}^u_{13|2}=0$ if and only if $u = \tau + m_2$. Via a similar argument we can show that for this $u$ we also have that $\tilde{\rho}^u_{14|2}=0$ and $\tilde{\rho}^u_{34|2}=0$.
	
	\section{Data simulations}
	In this section we give some additional details about the simulations that we used for the experiments in Section 7 of the main paper.
	
	We obtained the results in Figure 7a of the main paper, by generating random DAGs for $6$ variables with a connection probability of $0.7$, parameters chosen uniformly at random from the interval $[-1.0,1.0]$, and error variances chosen uniformly from the interval $[0.5,1.0]$. Three out of the $6$ variables were observed variables, and the remaining three were latent. We then used rejection sampling to select models for which the observed variables $(X_1,X_2,X_3)$ satisfied: the $\lambda$-strong faithfulness assumption for $\lambda=0.1$, $X_1\dep X_2$, $X_2\dep X_3$, and $X_1\dep X_3\given X_2$. For each model, we generated $10000$ datapoints and added measurement error with varying variances. 
	
	For the experiment in Figure 7b in the main paper, we generated models for three variables $(X_1,X_2,X_3)$ that satisfy: $X_1\dep X_2$, $X_2\dep X_3$, and $X_1\indep X_3\given X_2$. To do this, we considered all causal structures for $(X_1,X_2,X_3)$ that satisfied these conditions. For each causal structure, parameters were chosen uniformly at random from the interval $[-1.0,1.0]$ and error variances were chosen uniformly at random from the interval $[0.5,1.0]$. We then used rejection sampling to select model that satisfied the $\lambda$-strong faithfulness for $\lambda=0.1$. For each model, we generated $10000$ datapoints and added measurement error with varying variances. We made sure that an equal amount of models was selected from each causal structure.
	
	For the experiments in Figures 7a and 7b in the main paper, the measurement-error corrected method also gives the output `unknown'. The rate of conditional dependences and independences that could not be detected in the experiment are shown in Figures \ref{fig:cd uk} and \ref{fig:ci uk} respectively.
	
	To obtain the results in Figure 7c, we generated causal structures for triples of variables $(X_1,X_2,X_3)$. To generate data for triples that did not have the structure of an LCD triple, we used the simulations for Figure 7a, but only selected causal structures where $X_1$ was not caused by $X_2$ and $X_3$. Similarly, to generate data for triples that did have the structure of LCD triples, we used the simulations for Figure 7b, but only selected causal structures where $X_1$ can be treated as an intervention variable. We added measurement error with a fixed variance of $0.8$.
	
	Finally, in order to obtain the results in Figure 7d, we generated random DAGs with $15$ nodes and a connection probability of $0.15$. In this case edge weights were chosen uniformly at random from the interval $[0.8,1.2]$ and error variance were chosen uniformly at random from the interval $[0.5,1.0]$. After generating $10000$ datapoints for each model, we added measurement error with a fixed variance of $0.8$.
	
	\definecolor{mycolor1}{rgb}{0.00000,0.75000,0.75000}
	\definecolor{mycolor2}{rgb}{0.75000,0.00000,0.75000}
	\definecolor{mycolor3}{rgb}{0.75000,0.75000,0.00000}
	\definecolor{ao(english)}{rgb}{0.11, 0.35, 0.02}
	\definecolor{applegreen}{rgb}{0.4, 0.69, 0.2}
	
	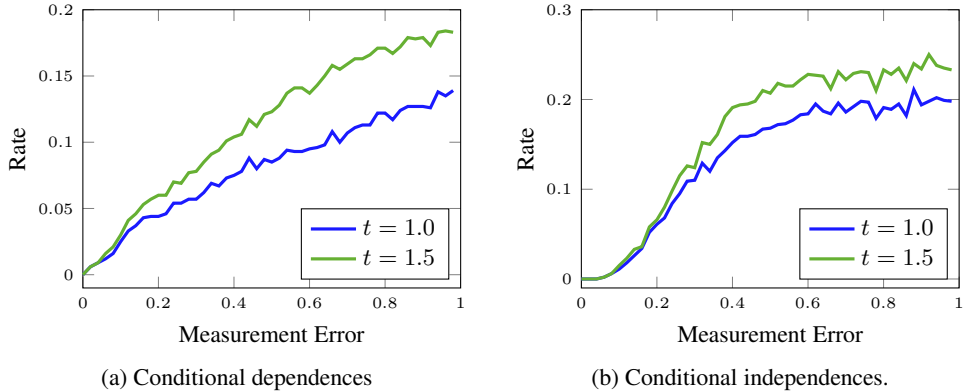
\begin{figure}[]
		\centering
		\begin{subfigure}[]{0.4\textwidth}
			\centering
			\begin{tikzpicture}
			\begin{axis}[
			width=\linewidth,
			height=0.8\linewidth,
			xlabel={\small Measurement Error},
			xmin=0,xmax=1.0,xtick={0,0.2,...,1.0},
			ylabel={\small Rate},
			yticklabel style={/pgf/number format/fixed},
			ymin=-0.01,ymax=0.2,ytick={0.0,0.05,0.10,0.15,0.2},
			ylabel shift=-2pt,
			legend pos=south east,
			]		
			\addplot[blue!90!white, line width=1.2pt] table[x index = {0}, y index = {5}, col sep = comma] {CI_dep.csv};
			\addplot[applegreen, line width=1.2pt] table[x index = {0}, y index = {6}, col sep = comma] {CI_dep.csv};	
			
			\addlegendentry{\small $t=1.0$};
			\addlegendentry{\small $t=1.5$};	
			\end{axis}
			\end{tikzpicture}
			\caption{Conditional dependences}
			\label{fig:cd uk}
		\end{subfigure}
		\begin{subfigure}[]{0.4\textwidth}
			\centering
			\begin{tikzpicture}
			\begin{axis}[
			width=\linewidth,
			height=0.8\linewidth,
			xlabel={\small Measurement Error},
			xmin=0,xmax=1.0,xtick={0,0.2,0.4,...,1.0},
			ylabel={\small Rate},
			ymin=-0.01,ymax=0.3,ytick={0,0.1,0.2,0.3},
			ylabel shift=-2pt,
			legend pos=south east,
			]	
			
			\addplot[blue!90!white, line width=1.2pt] table[x index = {0}, y index = {5}, col sep = comma] {CI_indep.csv};
			\addplot[applegreen, line width=1.2pt] table[x index = {0}, y index = {6}, col sep = comma] {CI_indep.csv};
			
			\addlegendentry{\small $t=1.0$};
			\addlegendentry{\small $t=1.5$};	
			\end{axis}
			\end{tikzpicture}
			\caption{Conditional independences.}
			\label{fig:ci uk}
		\end{subfigure}
		\caption{Rate of conditional dependences and independences that were not detected using a measurement error correction in the experiment in Figures 7a and 7b in the main paper.}
	\end{figure}
	
	\section{Protein Signaling Data}
	
	The raw data was pre-processed by transforming each datapoint $x$ by
	\begin{equation*}
	\hat{x} = \text{arcsinh}(x/5).
	\end{equation*}
	As a preprocessing step we filtered out cells that are in the M cell cycle phase according to the gating procedure described in \citep{Behbehani2012}. We motivate this filtering step by several reasons. First, cells in the M phase form a distinct cluster and strongly violate our assumption of a linear-Gaussian model since they represent a separate cluster. Second, in the M phase the cells already have doubled nuclei and some other organelle, so we cannot safely assume that the causal mechanisms of signalling are the same anymore. Therefore, the removal of these cells should be seen as removal of a contaminating population. Practically this came down to selecting only single cell measurements for which the abundance of the phosphorylated protein pHH3 was smaller than $3.0$.
	
	We then only included data in our (conditional) independence tests, when all measurements that were needed to conduct the test exceeded a lower threshold of $0.5$, to account for the detection limit in mass cytometry. 
	
	For the analysis in the main paper, we only considered the proteins that were over-expressed and whose phosphorylated abundances were also measured. We considered the measurements $5$ minutes after stimulation, because at this time-point the signaling responses were generally strong, see also Figure 3 in \citep{Lun2017}. For our analysis we only used the first replica of the experiment, which had the most measurements for each condition. 
	
	We analyzed a subset of the available proteins, based on the recommendations in \citet{Lun2017}, and excluded proteins from the cause variables when spill-over effects were reported under the condition that they were over-expressed, see also Table \ref{tab:cause proteins}. We also excluded pS6 because over-expressing it induced no strong signaling responses. Finally, we also discarded SHP2. Although the condition where SHP2 was over-expressed was not affected by spill-over effects, the measured \emph{phosphorylated} abundances of pSHP2 were affected by spill-over affects under multiple conditions.
	
	\begin{table}[h!]
		\centering
		\caption{Proteins that are both over-expressed in one of the conditions and whose phosphorylated abundance is measured under all conditions, with an indication whether spillover effects are present.}
		\label{tab:cause proteins}
		\begin{tabular}{l l l}
			Over-expressed protein & Measured protein & Spill-over\\
			\midrule
			JNK1 & pJNK & no \\
			MKK6 & pMKK3/6 & no \\
			PDPK1 & pPDPK1 & yes \\
			P38 & pP38 & no \\
			AKT1 & pAKT & no \\
			ERK2 & pERK & no \\
			SHP2 & pSHP2 & no \\
			GSK3B & pGSK3B & yes \\
			S6 & pS6 & no*\\
			P90RSK & pP90RSK & yes \\
			MEK1 & pMEK1/2 & no \\
			P70S6K & pS6K & no\\
			\bottomrule
		\end{tabular}
	\end{table}
	
\end{document}